\documentclass{article}
\usepackage{iclr2027_conference,times}
\iclrfinalcopy 
\usepackage{amsmath,amssymb,amsthm,amsfonts,mathtools,bm}
\usepackage{booktabs,array,multirow,graphicx,microtype,xcolor}
\usepackage{algorithm,algpseudocode,enumitem}
\usepackage{hyperref}
\hypersetup{hidelinks,pdfauthor={},pdftitle={When Can Prefixes Compile LoRA?}}
\newtheorem{theorem}{Theorem}
\newtheorem{proposition}[theorem]{Proposition}
\newtheorem{corollary}[theorem]{Corollary}
\newtheorem{lemma}[theorem]{Lemma}
\theoremstyle{definition}\newtheorem{assumption}[theorem]{Assumption}
\newcommand{\R}{\mathbb R}
\newcommand{\E}{\mathbb E}
\newcommand{\WQ}{W^Q}\newcommand{\WK}{W^K}\newcommand{\WV}{W^V}\newcommand{\WO}{W^O}

\newcommand{\PKV}{\mathcal P_{\rm KV}^{<\infty}}
\newcommand{\norm}[1]{\left\lVert#1\right\rVert}
\title{When Can Prefixes Compile LoRA?\\
Exact Resource-Capped Tests\\
for Frozen Attention}

\author{\textbf{Joyanta Jyoti Mondal}\textsuperscript{1,*},
\textbf{Ibne Farabi Shihab}\textsuperscript{2,*}
\\
\textsuperscript{1}Department of Computer and Information Sciences, University of Delaware, USA\\
\textsuperscript{2}Department of Computer Science, Iowa State University, USA\\
\small{
\textsuperscript{\textbf{*}}Equal Contribution.
\textbf{Correspondence:}
\href{mailto:ishihab@iastate.edu}{ishihab@iastate.edu}
}
}

\begin{document}
\maketitle
\begin{abstract}
Can a fixed continuous prefix replace a given low-rank adapter while the attention head stays frozen? In this research, we show that the answer depends on the adapter's target through three conditions. First, observability: at one causal readout, every independent key--value prefix sees the content only through the query, attention partition, and value numerator, so a target that differs on two inputs with equal summaries incurs an error floor at every prefix length; norm caps extend this floor to nearly equal summaries. Second, realizability: at a common query, any prefix reduces exactly to two aggregate variables, and the norm-capped optimum is an attained second-order-cone program, also after a fixed output projection; it places two equal-norm rank-one value updates on opposite sides of compilability. Third, implementation: under affine query exposure, $2r$ signed slots approximate a rank-$r$ value update, but their values grow as $O(\epsilon^{-3/2})$, and the construction passes all 400 tolerance checks in float64 yet only 38 in bfloat16. A first-layer GPT-2 readout with fixed token and position meets the common-query condition without clamping activations; at three such heads, the capped optimum leaves 18.4\% to 74.2\% of the projected adapter effect uncompiled, with a head-dependent value--query ordering. All claims concern local approximation at one head, not whole-network equivalence.
\end{abstract}

\section{Introduction}
A low-rank adapter changes parameters; a prefix changes the context visible to a frozen attention mechanism \citep{hu2022lora,li2021prefix}. Similar behavior does not mean that one can replace the other, so the practical question is target-specific: given a particular adapter and a fixed model, can one shared prefix reproduce its outputs on a declared input domain? A longer prefix helps only if the missing behavior lies within what the interface can represent.

Two established results leave this question open. On one hand, a prefix preserves relative attention among the existing content tokens and only mixes the frozen output with a query-dependent prefix contribution \citep{petrov2024when}. On the other hand, prompting can be universal for suitably constructed or pretrained transformers \citep{wang2023universality,petrov2024prompting}, including random transformers under appropriate rank conditions \citep{hsu2026trainingfree}. Yet attention invariance does not decide output emulation, because prefix values can compensate for unchanged attention weights, and universality of a model class does not make a prescribed update emulable at a fixed head.

\begin{figure}[t]
\centering
\includegraphics[width=\linewidth]{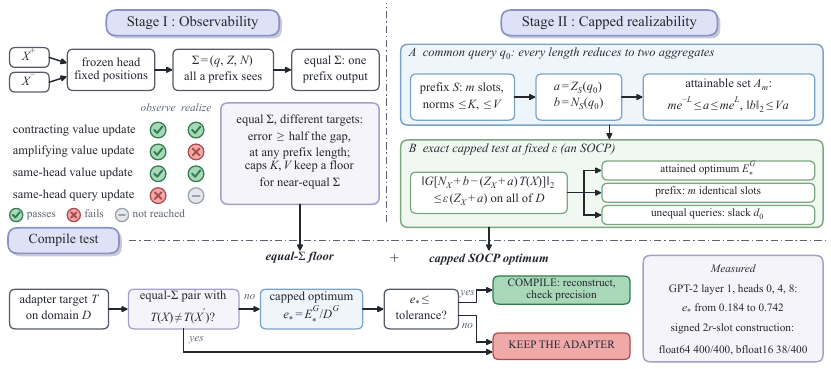}
\caption{Overview. \textbf{Stage I}: a prefix sees the content only through $\Sigma=(q,Z,N)$, so a target that differs on an equal-$\Sigma$ pair keeps an error floor at every prefix length (Section~\ref{sec:statistic}). \textbf{Stage II}: at a common query, any prefix reduces to two aggregates, and the capped optimum is an exact SOCP (Section~\ref{sec:realizability}). The \textbf{compile test} combines both; a compiled prefix must still be checked at the target precision (Section~\ref{sec:constructive}). Measured values are from Section~\ref{sec:empirical}.}
\label{fig:overview}
\end{figure}

We therefore study one causal softmax head with its input states, readout, and content positions fixed, and give the prefix its strongest local form: independently chosen keys and values, with one prefix shared across all inputs. Under this interface, compilability depends on the target rather than on whether the adapter is value-side or query-side. Figure~\ref{fig:overview} organizes our results as three tests that a target must pass. Our contributions are:
\begin{itemize}[leftmargin=*,itemsep=1pt,topsep=2pt]
\item \textbf{Observability.} The prefixed response depends on content only through $\Sigma=(q,Z,N)$, a complete intervention statistic. Target variation within an equal-summary pair gives an error floor at every prefix length, and key and value caps extend it to near-equal summaries (Section~\ref{sec:statistic}).
\item \textbf{Exact capped realizability.} At a common query, every prefix length reduces to two aggregate variables, and norm-capped feasibility is an SOCP with an exact reconstruction, also after a fixed output projection; a proved slack covers unequal queries. Two equal-norm rank-one value updates fall on opposite sides of this test, and a fixed-token first-layer GPT-2 readout meets its common-query condition; there, the test leaves 18.4\% to 74.2\% of trained adapter effects uncompiled (Sections~\ref{sec:realizability} and~\ref{sec:pretrained-results}).
\item \textbf{Construction and precision.} Affine query exposure lets $2r$ signed slots approximate a rank-$r$ value update, with values growing as $O(\epsilon^{-3/2})$. Controlled tests confirm it on all 400 float64 cases but only 38 in bfloat16, and show that extra capped slots can raise the optimal error (Sections~\ref{sec:constructive} and~\ref{sec:empirical}).
\end{itemize}

\section{Related work}
\label{sec:related}
Analyses of in-context learning recover learning algorithms or implicit weight updates in linear and simplified settings \citep{dai2023why,akyurek2023what,vonoswald2023transformers}, and prefix-tuning and soft-prompt tuning learn continuous context directly \citep{li2021prefix,lester2021power}. These results motivate converting between context and parameters, but neither direction is automatically invertible for one fixed head and a prescribed adapter. Closest to our question, ReasonCACHE \citep{gupta2026reasoncache} compares the output subspaces that prefixes and LoRA reach at a fixed context. Our requirement is stricter: a single prefix must reproduce one target across many inputs, which a per-input comparison does not enforce.

We build on the prefix decomposition of \citet{petrov2024when} and do not claim its relative-attention invariant as new, nor the competition for attention mass it implies \citep{wang2026prefixmemory}. From that interface we extract a complete intervention statistic, a robust target-error bound, and a finite-dimensional realizability class at a common query. The cone formulation uses standard quasiconvex feasibility machinery \citep{boyd2004convex}; the attention-specific step is the exact description of attainable prefix mass and numerator. Universality, capacity, and memorization results for prompting \citep{wang2023universality,petrov2024prompting,hu2025fundamental,meyer2025memory,hsu2026trainingfree} rely on different model and exposure conditions, whereas our positive construction targets an exposed linear value update with a rank-dependent slot count. Unlike general attention-sensitivity bounds \citep{kim2021lipschitz,castin2024smooth}, our bound fixes the content summary and caps prefix keys and values. Appendix~\ref{app:related} discusses context-to-weight conversion, memorization limits, and parameter-efficient adaptation in more detail.

\section{A fixed attention interface}
\label{sec:setup}
Consider a causal head with $\WQ,\WK\in\R^{d_{k}\times d}$, $\WV\in\R^{d_{v}\times d}$, and output projection $\WO\in\R^{d_{o}\times d_{v}}$. For $X=(x_{1},\ldots,x_{j})$ at readout $j$, define
\begin{align}
q&=\WQ x_{j},\qquad k_{i}=\WK x_{i},\qquad v_{i}=\WV x_{i},\nonumber\\
s_{i}&=q^{\top} k_{i}/\sqrt{d_{k}},\qquad Z_{X}=\sum_{i\le j}e^{s_{i}},\qquad
N_{X}=\sum_{i\le j}e^{s_{i}}v_{i},\qquad h_{X}=N_{X}/Z_{X}.
\label{eq:base-head}
\end{align}
A finite independent key--value prefix is $S=\{(\kappa_{t},\nu_{t})\}_{t=1}^{m}$, with $m\ge1$, whose keys and values are chosen directly rather than derived from a common embedding. Let
\begin{equation}
Z_{S}(q)=\sum_{t} e^{q^{\top}\kappa_{t}/\sqrt{d_{k}}},\qquad
N_{S}(q)=\sum_{t} e^{q^{\top}\kappa_{t}/\sqrt{d_{k}}}\nu_{t}.
\end{equation}
The prefixed head is
\begin{equation}
h_{S,X}=\frac{N_{X}+N_{S}(q)}{Z_{X}+Z_{S}(q)}.
\label{eq:prompt-head}
\end{equation}
Throughout, content states and their positions stay fixed. This interface contains every key--value pair that coupled soft tokens can realize at the head, but it excludes prefix-induced changes to earlier layers and positional shifts. Our lower bounds therefore apply to coupled head-level soft tokens under the same fixed-state assumptions, not automatically to input prompts in a deep transformer. Affine projection biases are absorbed by augmenting the input with a constant coordinate, while prefix keys and values remain the actual post-projection vectors. For a head inside a multihead block, $\WO$ denotes its output-projection block, and the other heads' contributions stay unchanged.

An adapter defines a target $T_{\Delta}(X)=h_{X}^{\Delta}$, the head output after the update. Value LoRA has $\Delta\WV=BA$, with $B\in\R^{d_{v}\times r}$ and $A\in\R^{r\times d}$, and query LoRA changes $\WQ$ instead. Because one fixed prefix must serve every $X$ in a nonempty domain $\mathcal {D}$, we measure its uniform error
\begin{equation}
\mathcal {E}_{\mathcal {D}}(S,T)=\sup_{X\in\mathcal {D}}\norm{h_{S,X}-T(X)}_{2}.
\label{eq:uniform-error}
\end{equation}
Let $\PKV$ denote all such finite nonempty prefixes. A target compiles when $\inf_{S\in\PKV}\mathcal {E}_{\mathcal {D}}(S,T)=0$, and compiles exactly when the infimum is attained. An input-dependent prefix $S(X)$ falls outside this definition.

\section{Observability and robust error floors}
\label{sec:statistic}
We first ask what a prefix can see of the content.
\begin{lemma}[Prefix decomposition; \citealp{petrov2024when}]
\label{lem:prefix-decomposition}
For $\rho=Z_{S}(q)/(Z_{X}+Z_{S}(q))$ and $g_{S}(q)=N_{S}(q)/Z_{S}(q)$,
\begin{equation}
h_{S,X}=(1-\rho)h_{X}+\rho g_{S}(q).
\label{eq:prefix-mixture}
\end{equation}
Consequently, the content is observed only through
\begin{equation}
\Sigma(X)=(q,Z_{X},N_{X}).
\label{eq:sufficient-statistic}
\end{equation}
\end{lemma}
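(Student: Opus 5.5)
The plan is a direct algebraic verification, followed by a reading of which quantities the resulting formula actually uses.

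\textbf{Step 1 (positivity).} I would first note that the prefix is nonempty ($m\ge1$) and that exponentials are positive. Hence $Z_S(q)>0$, so $g_S(q)$ is well defined. Likewise $Z_X>0$, since the sum defining it includes at least the readout token $i=j$. It follows that $\rho\in(0,1)$.

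\textbf{Step 2 (the mixture identity).} I would start from the prefixed head \eqref{eq:prompt-head}. Write $N_X = Z_X h_X$, which holds by the definition of $h_X$ in \eqref{eq:base-head}, and write $N_S(q)=Z_S(q)g_S(q)$. Dividing the numerator term by term by $Z_X+Z_S(q)$ gives
\[
h_{S,X}=\frac{Z_X}{Z_X+Z_S(q)}\,h_X+\frac{Z_S(q)}{Z_X+Z_S(q)}\,g_S(q).
\]
The first coefficient is exactly $1-\rho$ and the second is $\rho$, which yields \eqref{eq:prefix-mixture}.

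\textbf{Step 3 (observability).} For the claim that the content is observed only through $\Sigma(X)$, I would not use the mixture form. Instead I would read off \eqref{eq:prompt-head} directly: $h_{S,X}=\bigl(N_X+N_S(q)\bigr)/\bigl(Z_X+Z_S(q)\bigr)$. The prefix enters only through the fixed pairs $(\kappa_t,\nu_t)$ and the query $q$, so $N_S(q)$ and $Z_S(q)$ are functions of $q$ alone. Consequently $h_{S,X}=F_S(q,Z_X,N_X)$ for a map $F_S$ that does not depend on $X$ otherwise.

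The precise statement I would record is this: if $\Sigma(X)=\Sigma(X')$, then $h_{S,X}=h_{S,X'}$ for every $S\in\PKV$. This follows simultaneously for all prefix lengths $m$, because $F_S$ never depends on the individual content keys, values, or positions beyond these aggregates. Equivalently, $h_X=N_X/Z_X$ and $\rho$ are themselves functions of $\Sigma$. If an output projection is present, composing with the fixed linear map $\WO$ preserves this dependence.

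\textbf{Main obstacle.} There is no real analytic difficulty here; the algebra is routine. The only care needed is interpretive. The word ``only'' should be formalized as the factorization $h_{S,X}=F_S(\Sigma(X))$, with $F_S$ uniform over the domain $\mathcal D$. I would also remark that the lemma makes no claim that $\Sigma$ is minimal; that question would be addressed separately by the completeness result later in the section.
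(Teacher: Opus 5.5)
Your proposal is correct and follows essentially the same route as the paper: split the numerator of \eqref{eq:prompt-head} into content and prefix parts, rewrite them as $Z_X h_X$ and $Z_S(q)g_S(q)$, and read off that the only content-dependent arguments are $q$, $Z_X$, and $N_X$. Your positivity check ($Z_X>0$ and $Z_S(q)>0$, so $g_S(q)$ and $\rho$ are well defined) is a small addition in rigor that the paper leaves implicit.
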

\emph{Proof sketch.} The mixture follows by splitting the numerator of~\eqref{eq:prompt-head}. We call two inputs with the same $\Sigma$ an equal-summary pair; they lie in the same $\Sigma$-fiber, and every prefix returns one common output on them. Conversely, $\Sigma$ is exactly what a prefix can probe:

\begin{proposition}[Complete intervention statistic]
\label{prop:complete-statistic}
Two inputs have the same $\Sigma$ if and only if their outputs agree under every one-slot independent-KV prefix.
\end{proposition}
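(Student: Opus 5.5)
The forward direction is immediate from Lemma~\ref{lem:prefix-decomposition}, or directly from~\eqref{eq:prompt-head}. For a one-slot prefix $S=\{(\kappa,\nu)\}$ the output is
\[
h_{S,X}=\frac{N_X+a_q(\kappa)\,\nu}{Z_X+a_q(\kappa)},\qquad a_q(\kappa)=e^{q^\top\kappa/\sqrt{d_k}},
\]
which depends on $X$ only through $(q,Z_X,N_X)=\Sigma(X)$. Hence equal summaries give equal outputs for every one-slot prefix, and indeed for every prefix in $\PKV$.

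For the converse, I would take two inputs $X,X'$ with summaries $(q,Z,N)$ and $(q',Z',N')$ whose outputs agree for all $(\kappa,\nu)$. I recover the three components one at a time, using differences in $\nu$ to separate the weights from the numerators.

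\emph{Step 1 (recover $Z$ and $N$).} Take $\kappa=0$, so that $a_q(0)=a_{q'}(0)=1$ whatever the queries are. Agreement at $\nu=0$ gives $N/(Z+1)=N'/(Z'+1)$. Agreement at $\nu=e_1$, minus the $\nu=0$ case, gives $e_1/(Z+1)=e_1/(Z'+1)$. Since $Z,Z'>0$, this forces $Z=Z'$, and then $N=N'$.

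\emph{Step 2 (recover $q$).} With $Z=Z'$ and $N=N'$ fixed, take an arbitrary $\kappa$ and write $a=a_q(\kappa)$ and $a'=a_{q'}(\kappa)$. Subtracting the $\nu=0$ output from the $\nu=e_1$ output gives
\[
\frac{a}{Z+a}=\frac{a'}{Z+a'}.
\]
The map $t\mapsto t/(Z+t)$ is strictly increasing on $(0,\infty)$, so $a=a'$. Taking logarithms, $q^\top\kappa=q'^\top\kappa$ for every $\kappa\in\R^{d_k}$. Choosing $\kappa$ among the standard basis vectors then yields $q=q'$.

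The main point needing care is this second step. The query is not directly visible in the output; it is exposed only through how much attention mass the prefix slot captures, so it has to be read off from the gain in $\nu$ rather than from a single output. The two-probe difference trick ($\nu=e_1$ versus $\nu=0$) handles this and needs only $d_v\ge1$.

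I would also remark on what happens if outputs are compared after $\WO$. In that case the same argument recovers $Z$ and $q$ whenever $\WO e_1\neq0$ for some coordinate, which holds whenever $\WO\neq0$, but it recovers $N$ only modulo $\ker\WO$. The statement is therefore read at the head output $h_{S,X}$, as in~\eqref{eq:prompt-head}.
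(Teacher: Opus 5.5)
Your proof is correct and follows the paper's argument essentially step for step. A zero key (unit mass under any query) separates the $\nu$-coefficient $1/(Z+1)$ from the constant term $N/(Z+1)$, giving $Z=Z'$ and $N=N'$; an arbitrary key then gives $a/(Z+a)=a'/(Z+a')$, strict monotonicity forces $a=a'$, and $(q-q')^{\top}\kappa=0$ for all $\kappa$ yields $q=q'$. Your two probes $\nu\in\{0,e_1\}$ are just a concrete way of reading off the coefficient of $\nu$, which the paper compares directly. Your remark about comparing outputs after $\WO$ is a correct extra the paper does not need, provided you use a coordinate $e_i$ with $\WO e_i\neq0$ rather than $e_1$ specifically.
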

\emph{Proof sketch.} The forward direction is immediate. For the converse, a zero key with value $\nu$ gives the output $(N+\nu)/(Z+1)$; equality for every $\nu$ forces $Z=Z'$ and then $N=N'$. An arbitrary key then forces $e^{q^{\top}\kappa/\sqrt{d_{k}}}=e^{q'^{\top}\kappa/\sqrt{d_{k}}}$ for every $\kappa$, hence $q=q'$. Completeness concerns interventions; it does not make every function of $\Sigma$ expressible. It does imply that any target separating an equal-summary pair must incur error:

\begin{proposition}[Fiber-diameter obstruction]
\label{prop:fiber-bound}
For any target $T:\mathcal {D}\to\R^{d_{v}}$,
\begin{equation}
\inf_{S\in\PKV}\mathcal {E}_{\mathcal {D}}(S,T)
\ge\frac{1}{2}\sup_{\substack{X,X'\in\mathcal {D}\\\Sigma(X)=\Sigma(X')}}\norm{T(X)-T(X')}_{2}.
\label{eq:fiber-bound}
\end{equation}
\end{proposition}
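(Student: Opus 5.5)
The plan is to apply Lemma~\ref{lem:prefix-decomposition} and then a triangle inequality. Fix any prefix $S\in\PKV$ and any equal-summary pair $X,X'\in\mathcal D$ with $\Sigma(X)=\Sigma(X')$.

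\textbf{Step 1: the prefix cannot separate the pair.} By~\eqref{eq:prompt-head}, $h_{S,X}$ is a function of $(q,Z_X,N_X)$ and $S$ alone. Here $Z_S(q)$ and $N_S(q)$ depend only on $q$ and the prefix. Since all three components of $\Sigma$ agree, we get $h_{S,X}=h_{S,X'}$.

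\textbf{Step 2: triangle inequality.} Writing $h=h_{S,X}=h_{S,X'}$,
\begin{equation*}
\norm{T(X)-T(X')}_2\le\norm{h-T(X)}_2+\norm{h-T(X')}_2\le 2\,\mathcal E_{\mathcal D}(S,T),
\end{equation*}
where the last step uses the definition~\eqref{eq:uniform-error} of the uniform error as a supremum over $\mathcal D$, which contains both inputs.

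\textbf{Step 3: take suprema and infima.} Dividing by two gives $\mathcal E_{\mathcal D}(S,T)\ge\tfrac12\norm{T(X)-T(X')}_2$ for every equal-summary pair. Taking the supremum over such pairs leaves the left side unchanged, since it does not depend on the pair. Taking the infimum over $S\in\PKV$ then preserves the inequality, because the right side does not depend on $S$.

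\textbf{Edge cases.} If no nontrivial equal-summary pair exists, the supremum is taken over pairs $X=X'$ and equals $0$, so the bound is trivial. If the supremum is $+\infty$, the same argument shows every prefix has infinite uniform error, so the inequality holds in the extended reals.

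There is no real obstacle. The only point needing care is Step~1: confirming that the prefix's contribution depends on the content solely through $q$. This holds because prefix keys and values are fixed vectors independent of $X$, and content positions are fixed, so $Z_S(q)$ and $N_S(q)$ are functions of $q$ alone.
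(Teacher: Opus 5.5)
Your proposal is correct and follows the paper's proof essentially verbatim: it uses the common prefixed output on an equal-summary pair, then the triangle inequality bounding $\norm{T(X)-T(X')}_2$ by $2\mathcal{E}_{\mathcal{D}}(S,T)$, then a supremum over pairs and an infimum over prefixes. Your edge-case remarks (the trivial pairs $X=X'$ giving a zero bound, and the extended-real case) are harmless additions that the paper leaves implicit.
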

\emph{Proof sketch.} Apply the triangle inequality around the common prefixed output; the argument holds at any prefix length. On a continuous domain, however, exact collisions can be hard to find or to verify numerically. The next result shows how close two summaries must be once prefix resources are bounded.

\begin{theorem}[Length-independent near-fiber obstruction]
\label{thm:near-fiber}
Restrict every prefix slot to $\norm{\kappa_{t}}_{2}\le K$ and $\norm{\nu_{t}}_{2}\le V$. For a pair with $\norm{h_{X}}_{2},\norm{h_{X'}}_{2}\le H$, define
\begin{align}
\omega(X,X')={}&\norm{h_{X}-h_{X'}}_{2}
+\frac{H+V}{4}\left|\log Z_{X}-\log Z_{X'}\right|\nonumber\\
&+\left(V+\frac{H+V}{4}\right)\frac{K}{\sqrt{d_{k}}}\norm{q-q'}_{2}.
\label{eq:summary-modulus}
\end{align}
Every prefix of any finite length satisfies $\norm{h_{S,X}-h_{S,X'}}_{2}\le\omega(X,X')$. Therefore its worst-case target error on the pair is at least
\begin{equation}
\frac{1}{2}\left[\norm{T(X)-T(X')}_{2}-\omega(X,X')\right]_{+}.
\label{eq:near-fiber-bound}
\end{equation}
\end{theorem}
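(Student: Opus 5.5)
We start from the mixture form of Lemma~\ref{lem:prefix-decomposition}, writing $h_{S,X}=(1-\rho)h_X+\rho g$ with $\rho=\rho(Z_X,q)$ and $g=g_S(q)$. We then interpolate in three steps through two intermediate outputs: $X\to(h_{X'},Z_X,q)\to(h_{X'},Z_{X'},q)\to X'$. Each step changes one component of $\Sigma$, and each is matched to one term of $\omega$.

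\emph{Step 1 (change $h$).} With $Z_X$ and $q$ held fixed, $\rho$ and $g$ are unchanged, so the output moves by $(1-\rho)(h_X-h_{X'})$. Its norm is at most $\norm{h_X-h_{X'}}_2$.

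\emph{Step 2 (change $Z$).} With $h_{X'}$ and $q$ held fixed, only $\rho$ varies. Write $\rho=\sigma(\log Z_S-\log Z)$, where $\sigma$ is the logistic function. Since $\sigma'\le 1/4$, we get $|\Delta\rho|\le\frac14|\log Z_X-\log Z_{X'}|$. The output moves by $\Delta\rho\,(g-h_{X'})$. Here $g$ is a convex combination of the $\nu_t$, so $\norm{g}_2\le V$, and hence $\norm{g-h_{X'}}_2\le H+V$. This step therefore contributes the second term of $\omega$.

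\emph{Step 3 (change $q$).} With $(h_{X'},Z_{X'})$ held fixed, both $\rho(q)$ and $g(q)$ vary along the segment $q(\tau)=q+\tau(q'-q)$. Each slot weight has log-derivative $(q'-q)^\top\kappa_t/\sqrt{d_k}$, whose magnitude is at most $\ell:=K\norm{q-q'}_2/\sqrt{d_k}$.
\begin{itemize}
\item For $\log Z_S$: its derivative is a weighted average of these log-derivatives, so it is bounded by $\ell$. Since $\sigma'\le1/4$, $\rho$ moves by at most $\ell/4$ in total. Multiplying by $\norm{g-h_{X'}}_2\le H+V$ gives $\frac{H+V}{4}\ell$.
\item For $g=\sum_t p_t\nu_t$ with softmax weights $p_t$: $\frac{d}{d\tau}g=\sum_t p_t(a_t-\bar a)\nu_t$, where $a_t$ are the log-derivatives and $\bar a$ is their $p$-average. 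Its norm is at most $\sum_t p_t|a_t-\bar a|\,V$. This is bounded crudely by $2\ell V$, but we want $\ell V$.
\end{itemize}
The bound $\ell V$ comes from rewriting $\sum_t p_t(a_t-\bar a)\nu_t=\sum_t p_t(a_t-\bar a)(\nu_t-c)$ for any centering vector $c$. Taking $c=0$ gives $V\sum_t p_t|a_t-\bar a|$. The mean absolute deviation of values lying in $[-\ell,\ell]$ is at most $\ell$, since it is at most the standard deviation, which is at most $\ell$. This yields $\norm{dg/d\tau}\le\ell V$. The factor $(1-\rho)$ or $\rho\le1$ in front of the $g$ term only helps. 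Integrating over $\tau\in[0,1]$, Step 3 contributes at most $(V+\frac{H+V}{4})\ell$, the third term of $\omega$.

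Summing the three steps by the triangle inequality gives $\norm{h_{S,X}-h_{S,X'}}_2\le\omega(X,X')$, independently of the prefix length $m$.

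For the error bound, suppose the errors $e=\norm{h_{S,X}-T(X)}$ and $e'=\norm{h_{S,X'}-T(X')}$ satisfy $\max(e,e')=E$. The triangle inequality gives $\norm{T(X)-T(X')}\le e+e'+\omega\le 2E+\omega$. Rearranging yields the bound~\eqref{eq:near-fiber-bound}, which holds for every prefix.

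The main obstacle is Step 3: it requires the derivative of $g$ with the sharp constant $V$ rather than $2V$. If the mean-absolute-deviation argument gives trouble, we will fall back on bounding the variance $\mathrm{Var}_p(a)\le\ell^2$ via Cauchy--Schwarz.
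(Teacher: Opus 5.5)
Your proof is correct and follows essentially the paper's argument: both use the mixture form, the sigmoid-derivative bound $1/4$, the $K/\sqrt{d_k}$-Lipschitz bound on $\log Z_S$, $\norm{g_S}_2\le V$, and a variance-based bound $VK/\sqrt{d_k}$ on how fast $g_S$ moves with $q$. The differences are presentational: the paper uses the single identity $y-y'=(1-\rho)(h-h')+\rho(g-g')+(\rho-\rho')(g'-h')$ in place of your three-leg path, and it bounds the Jacobian of $g_S$ by applying Cauchy--Schwarz to $\operatorname{Cov}_{w}(u^{\top}\nu_t,v^{\top}\kappa_t)$, where you use mean absolute deviation $\le$ standard deviation $\le\ell$ (your centering-vector remark with $c=0$ adds nothing, but it is harmless).
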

\emph{Proof sketch.} Length independence rests on three facts: $\log Z_{S}$ is $K/\sqrt{d_{k}}$-Lipschitz in $q$; $g_{S}$ has Jacobian $\operatorname{Cov}(\nu,\kappa)/\sqrt{d_{k}}$ with operator norm at most $VK/\sqrt{d_{k}}$; and the mixing weight is a sigmoid of $\log Z_{S}-\log Z_{X}$, with derivative at most $1/4$. Combining them with~\eqref{eq:prefix-mixture} gives~\eqref{eq:summary-modulus}, and the triangle inequality gives~\eqref{eq:near-fiber-bound} (Appendix~\ref{app:near-proof}). The caps are essential, since the theorem says nothing about prefixes whose magnitudes grow without bound.

\section{Observable targets need not be realizable}
\label{sec:realizability}
Equal summaries are not the only obstruction. Even when the summary fully determines the target, softmax normalization can prevent any single prefix from realizing it. At a common query, the optimum can be characterized exactly rather than only bounded.

\begin{theorem}[Exact common-query reduction]
\label{thm:common-query}
On a finite domain with a common nonzero query $q_{0}$, the optimal prefix error is
\begin{equation}
\inf_{S\in\PKV}\mathcal {E}_{\mathcal {D}}(S,T)
=\inf_{a>0,\,b\in\R^{d_{v}}}\max_{X\in\mathcal {D}}
\norm{\frac{N_{X}+b}{Z_{X}+a}-T(X)}_{2}.
\label{eq:common-query-reduction}
\end{equation}
Every $(a,b)$ on the right is realized by one slot. If, additionally, $Z_{X}=Z_{0}$ is constant, the closure of prefix outputs on the domain is exactly
\begin{equation}
 h_{S,X}=t h_{X}+c,\qquad 0\le t\le1,\quad c\in\R^{d_{v}}.
\label{eq:contraction-class}
\end{equation}
For a two-input domain, the resulting minimax error is
\begin{equation}
\frac{1}{2}\min_{0\le t\le1}
\norm{T(X^{+})-T(X^{-})-t(h_{X^{+}}-h_{X^{-}})}_{2}.
\label{eq:two-point-realizability}
\end{equation}
\end{theorem}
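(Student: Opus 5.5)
The plan is to reduce the prefix to the pair $(a,b)=(Z_S(q_0),N_S(q_0))$. With a common query $q_0$, every $X\in\mathcal D$ sees the same numbers $Z_S(q_0)$ and $N_S(q_0)$, so \eqref{eq:prompt-head} gives $h_{S,X}=(N_X+b)/(Z_X+a)$. Here $a>0$ because $m\ge1$ and exponentials are positive, and $b\in\R^{d_v}$. This shows that the left side of \eqref{eq:common-query-reduction} is at least the right side. The supremum in \eqref{eq:uniform-error} becomes a maximum because the domain is finite.

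For the reverse inequality, I would realize any $(a,b)$ with a single slot. Since $q_0\neq0$, take $\kappa=\sqrt{d_k}\,\log(a)\,q_0/\norm{q_0}_2^2$, so that $e^{q_0^\top\kappa/\sqrt{d_k}}=a$, and take $\nu=b/a$. Then $Z_S=a$ and $N_S=b$. This yields equality in \eqref{eq:common-query-reduction}, and it also gives the one-slot realization claim. Nonzeroness of $q_0$ is exactly what allows $a$ to range over all of $(0,\infty)$.

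For constant $Z_X=Z_0$, I would rewrite the output as
\[
\frac{Z_0h_X+b}{Z_0+a}=t\,h_X+c,\qquad t=\frac{Z_0}{Z_0+a}\in(0,1),\quad c=\frac{b}{Z_0+a}.
\]
The map $(a,b)\mapsto(t,c)$ is a bijection onto $(0,1)\times\R^{d_v}$, with inverse $a=Z_0(1-t)/t$ and $b=(Z_0+a)c$. So the set of attainable output tuples $(h_{S,X})_{X\in\mathcal D}$ is $\{(th_X+c)_X: t\in(0,1),\,c\in\R^{d_v}\}$. Its closure in the product space is the same set with $t\in[0,1]$. The closure contains the endpoints $t=0,1$ as limits ($a\to\infty$ with $b=ac$, and $a\to0$ with $b\to0$). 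Conversely, the closure adds nothing else: the parametrization is continuous on the closed parameter set, and I would extract convergent parameters from a convergent sequence. This is the one mildly delicate point. Boundedness of $c$ follows from boundedness of the outputs together with $t\in[0,1]$ and bounded $h_X$; then I pass to a subsequence.

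For two inputs, I would minimize $\max_\pm\norm{t h_{X^\pm}+c-T(X^\pm)}_2$ over $c$ for each fixed $t$. Write $u_\pm=T(X^\pm)-th_{X^\pm}$. The problem is the smallest enclosing ball of the two points $u_\pm$, whose radius is $\tfrac12\norm{u_+-u_-}_2$, attained at the midpoint $c$. Minimizing over $t$ in the closed interval is justified because the infimum over $(0,1)$ equals the minimum over $[0,1]$ by continuity. This gives \eqref{eq:two-point-realizability}, and the attained minimum shows the infimum is approached, though not necessarily attained, by actual prefixes.

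I expect the only real care to be the closure statement, specifically justifying that limits leave the $t\in[0,1]$ family. The rest is direct substitution.
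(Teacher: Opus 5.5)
Your proposal is correct and follows the paper's proof essentially step by step. The shared steps are the reduction to $(a,b)=(Z_S(q_0),N_S(q_0))$, the one-slot realization with key $\sqrt{d_k}\log(a)\,q_0/\norm{q_0}_2^2$ and value $b/a$, the reparametrization $t=Z_0/(Z_0+a)$, $c=b/(Z_0+a)$ with the endpoints added as limits, and the midpoint translation for the two-point minimax; your subsequence argument for closedness is slightly more careful than the paper's one-line assertion. One small slip: reaching the $t=1$ endpoint with general $c$ needs $a\to0$ with $b\to Z_0c$, not $b\to0$ (which only gives $c=0$), but your fixed-$c$ continuity argument already covers this.
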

\emph{Proof sketch.} Any prefix supplies $a=Z_{S}(q_{0})>0$ and $b=N_{S}(q_{0})$; conversely, the single slot $\kappa=\sqrt{d_{k}}\log(a)q_{0}/\norm{q_{0}}^{2}$, $\nu=b/a$ realizes any such pair. With a common partition, $t=Z_{0}/(Z_{0}+a)$ and $c=b/(Z_{0}+a)$ give every $0<t<1$ with any translation $c$, and limits add the endpoints. For two inputs, the best translation aligns the midpoints of targets and predictions, leaving half of the difference mismatch. Some of these infima are not attained by finite parameters.

At fixed tolerance $\epsilon$, the constraints $\norm{N_{X}+b-(Z_{X}+a)T(X)}_{2}\le\epsilon(Z_{X}+a)$ are second-order-cone constraints, so varying partitions do not prevent a convex feasibility test. The unconstrained infimum needs care because of strict positivity and unbounded mass, and Appendix~\ref{app:unconstrained-compactification} gives a compactified formulation. A zero query is a separate boundary case, since every prefix key then has unit mass. Explicit resource caps remove these difficulties and yield an attained optimum.
$\norm{N_{X}+b-(Z_{X}+a)T(X)}_{2}\le\epsilon(Z_{X}+a)$ are second-order-cone constraints. Thus varying partitions do not prevent a convex feasibility test. Strict positivity and unbounded mass require care for an unconstrained infimum; Appendix~\ref{app:unconstrained-compactification} gives a compactified formulation. The next result supplies an attained optimum under explicit resource caps. A zero query is a separate boundary because every prefix key then has unit mass.

\subsection{Exact resource-capped feasibility}
\label{sec:capped-socp}
\begin{theorem}[Capped aggregate characterization]
\label{thm:capped-common-query}
Fix a finite nonempty domain with common query $q_{0}\ne0$, exactly $m\ge1$ prefix slots, and caps $\norm{\kappa_{t}}_{2}\le K$, $\norm{\nu_{t}}_{2}\le V$, where $K,V\ge0$. Set $L=K\norm{q_{0}}_{2}/\sqrt{d_{k}}$. The attainable pairs $(a,b)=(Z_{S}(q_{0}),N_{S}(q_{0}))$ are exactly
\begin{equation}
 \mathcal {A}_{m}=\{(a,b):me^{-L}\le a\le me^{L},\quad\norm{b}_{2}\le Va\}.
 \label{eq:capped-aggregates}
\end{equation}
For a fixed linear map $G$ on head outputs, the minimum projected target error is attained and is the smallest $\epsilon\ge0$ for which
\begin{equation}
 (a,b)\in\mathcal {A}_{m},\qquad
 \norm{G[N_{X}+b-(Z_{X}+a)T(X)]}_{2}\le\epsilon(Z_{X}+a)
 \quad\text{for all }X
 \label{eq:capped-socp}
\end{equation}
is feasible. At fixed $\epsilon$, this is an SOCP. Every feasible $(a,b)$ has an exact realization using $m$ identical slots:
\begin{equation}
 \kappa_{t}=\frac{\sqrt{d_{k}}\log(a/m)}{\norm{q_{0}}_{2}^{2}}q_{0},
 \qquad \nu_{t}=b/a.
 \label{eq:aggregate-reconstruction}
\end{equation}
For $q_{0}=0$, replace the mass interval by $a=m$; zero keys and the same values realize every admissible pair.
\end{theorem}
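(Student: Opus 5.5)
The plan has three parts: characterize the attainable aggregates by a two-sided inclusion, recast the error as the stated cone program, and argue attainment by compactness.

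\textbf{Attainable set.} For $\mathcal A_m$ we prove both inclusions.

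For necessity, write $w_t=e^{q_0^\top\kappa_t/\sqrt{d_k}}$. Cauchy--Schwarz gives $|q_0^\top\kappa_t|/\sqrt{d_k}\le K\norm{q_0}_2/\sqrt{d_k}=L$, so $w_t\in[e^{-L},e^{L}]$. Summing the $m$ slots gives $a=\sum_t w_t\in[me^{-L},me^{L}]$. Since $b=\sum_t w_t\nu_t$ and the weights are positive, the triangle inequality gives $\norm{b}_2\le V\sum_t w_t=Va$.

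For sufficiency, take any $(a,b)\in\mathcal A_m$ and use the identical-slot reconstruction~\eqref{eq:aggregate-reconstruction}. Then $q_0^\top\kappa_t/\sqrt{d_k}=\log(a/m)$, so each slot has mass $a/m$, the total mass is $a$, and the numerator is $a\cdot b/a=b$. The caps still hold: $\norm{\kappa_t}_2=\sqrt{d_k}|\log(a/m)|/\norm{q_0}_2\le\sqrt{d_k}L/\norm{q_0}_2=K$, and $\norm{\nu_t}_2=\norm{b}_2/a\le V$. If $K=0$ or $V=0$, the interval or ball degenerates but the argument is unchanged.

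For $q_0=0$, every slot has mass $1$, so $a=m$. Zero keys with $\nu_t=b/m$ realize any $\norm{b}_2\le Vm$, and the same triangle bound gives necessity.

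\textbf{Error as a cone program.} Because the query is common, each $h_{S,X}$ depends on $S$ only through $(a,b)$, by~\eqref{eq:prompt-head}. The projected error at $X$ is
\[
\norm{G[(N_X+b)/(Z_X+a)-T(X)]}_2.
\]
Multiplying by $Z_X+a>0$ shows that this is at most $\epsilon$ exactly when the constraint in~\eqref{eq:capped-socp} holds. For fixed $\epsilon$, that constraint bounds the norm of an affine function of $(a,b)$ by the affine function $\epsilon(Z_X+a)$, which is a second-order cone. The set $\mathcal A_m$ is an interval bound together with the cone $\norm{b}_2\le Va$, so the feasibility problem is an SOCP.

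\textbf{Attainment.} This is the only step that is more than routine. The set $\mathcal A_m$ is compact: $a$ lies in a closed bounded interval with $a\ge me^{-L}>0$, and $b$ lies in a closed ball. The objective
\[
F(a,b)=\max_X\norm{G[(N_X+b)/(Z_X+a)-T(X)]}_2
\]
is a finite maximum of functions that are continuous wherever $Z_X+a>0$, which holds on all of $\mathcal A_m$. Hence $F$ attains its minimum $\epsilon^\ast$ at some $(a^\ast,b^\ast)$. Since $F(a,b)\le\epsilon$ is equivalent to feasibility of~\eqref{eq:capped-socp} at $\epsilon$, the feasible $\epsilon$ form exactly $[\epsilon^\ast,\infty)$, so $\epsilon^\ast$ is the smallest feasible value. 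The reconstruction above converts the minimizer into an explicit $m$-slot prefix, so the optimum is attained by a finite-parameter prefix.

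The point needing care is that, unlike in Theorem~\ref{thm:common-query}, the caps keep $a$ away from both $0$ and $\infty$; this is exactly what restores compactness and hence attainment.
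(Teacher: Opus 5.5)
Your proposal is correct and follows essentially the same route as the paper's proof. Both use Cauchy--Schwarz and the triangle inequality for necessity, the identical-slot reconstruction for sufficiency (including the $q_0=0$ and degenerate-cap cases), multiplication by the positive denominator $Z_X+a$ for the SOCP form, and compactness of $\mathcal{A}_m$ with continuity of the finite maximum for attainment.
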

\emph{Proof sketch.} Necessity holds because each key contributes mass in $[e^{-L},e^{L}]$ and the value cap gives $\norm {b}\le Va$; the reconstruction~\eqref{eq:aggregate-reconstruction} gives sufficiency, and compactness of $\mathcal {A}_{m}$ with positive denominators gives attainment (Appendix~\ref{app:capped-proof}, with solver details). Taking $G=I$ evaluates the head itself, whereas $G=\WO$ tests whether an obstruction survives the actual output projection; $G$ need not be invertible.

With exactly $m$ slots, some prefix mass is unavoidable. With at most $m$ slots, one minimizes over $1\le k\le m$, adding the unprefixed output when zero slots are allowed, so the optimum cannot worsen as $m$ grows. The next example shows that realizability alone can decide compilability.

\begin{corollary}[Equal-norm value updates on opposite sides]
\label{cor:matched-value}
Take $x^{\pm}=(1,\pm1)$, $\WQ=[1\;0]$, $\WK=0$, and $\WV=[0\;1]$ at a one-token readout. The rank-one updates $\Delta\WV_{-}=[0\;-1/2]$ and $\Delta\WV_{+}=[0\;1/2]$ have the same norm and unprefixed error $1/2$. Both targets are functions of $\Sigma$. The first compiles exactly, whereas the second has minimax prefix error $1/2$ at every finite prefix length.
\end{corollary}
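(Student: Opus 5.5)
Compute everything explicitly and then read off the answer from Theorem~\ref{thm:common-query}.

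\emph{Setup.} Since $\WK=0$, every content key is zero and $s_{1}=0$. The single content token therefore has $Z_{X^{\pm}}=1$ and $N_{X^{\pm}}=h_{X^{\pm}}=\WV x^{\pm}=\pm1$. The query is $q=\WQ x^{\pm}=1$ for both inputs. This is a common nonzero query $q_{0}=1$ with constant partition $Z_{0}=1$. The updates give targets
\[
T_{\mp}(X^{\pm})=(\WV+\Delta\WV_{\mp})x^{\pm}=\pm(1\mp\tfrac12).
\]
So $T_{-}(X^{\pm})=\pm\tfrac12$ and $T_{+}(X^{\pm})=\pm\tfrac32$. Both updates have Frobenius norm $1/2$. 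Each unprefixed error $|h_{X}-T(X)|$ equals $1/2$ on both inputs, hence uniform error $1/2$. The two inputs have distinct $\Sigma=(1,1,\pm1)$, so any function on the domain factors through $\Sigma$. In particular, Proposition~\ref{prop:fiber-bound} gives no obstruction.

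\emph{Compilable case.} I will use the contraction class \eqref{eq:contraction-class} with $t=1/2$ and $c=0$. This gives $h_{S,X}=\tfrac12 h_{X}=T_{-}(X)$. Concretely, the one-slot reconstruction from Theorem~\ref{thm:common-query} uses $a=1$ and $b=0$, i.e.\ $\kappa=0$ and $\nu=0$. It outputs $(N_{X}+0)/(1+1)=N_{X}/2$ exactly. The infimum is therefore attained, so $T_{-}$ compiles exactly. This works at any length $m$ by using $m$ identical slots with total mass $1$, i.e.\ keys $\kappa=-\log m$ and zero values.

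\emph{Obstructed case.} Apply the two-point formula \eqref{eq:two-point-realizability}. Here $T_{+}(X^{+})-T_{+}(X^{-})=3$ and $h_{X^{+}}-h_{X^{-}}=2$, so the error is
\[
\tfrac12\min_{0\le t\le1}|3-2t|=\tfrac12 .
\]
The minimum is at $t=1$. By \eqref{eq:common-query-reduction}, this is the infimum over all finite prefixes, so every finite length $m$ has error at least $1/2$. For the matching upper bound at each fixed $m$, I take $m$ slots with zero values and very negative keys. This drives $t\to1$, so the error approaches $1/2$.

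\emph{Main obstacle.} The one point needing care is that \eqref{eq:two-point-realizability} describes the closure. I will therefore phrase the claim "minimax prefix error $1/2$" as the infimum, and note that it holds separately for each $m$. The lower bound holds for every $m$ because each $m$-slot prefix induces some $(a,b)$ with $a>0$, giving $t<1$ and error strictly above $1/2$. The upper bound is the limit construction above.
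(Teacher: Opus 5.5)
Your proposal is correct and follows the paper's own argument. You compute $q=Z=1$, $h_{X^{\pm}}=\pm1$ and targets $\pm\tfrac12$, $\pm\tfrac32$; a zero-key, zero-value slot realizes the contraction $t=\tfrac12$; the amplification floor comes from the two-point formula~\eqref{eq:two-point-realizability}; and the fiber bound is vacuous because the numerators differ. Your per-length constructions usefully make the ``every finite prefix length'' clause explicit: keys $-\log m$ give exact compilation, and vanishing mass gives the unattained $\tfrac12$ upper bound. One small fix: the display $T_{\mp}(X^{\pm})=\pm(1\mp\tfrac12)$ couples the adapter sign to the input sign, so replace it with the two lines you state next.
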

\emph{Proof sketch.} Here $q=Z=1$, $h^{\pm}=\pm1$, and the targets are $\pm1/2$ or $\pm3/2$. A zero-key, zero-value slot realizes the contraction $t=1/2$, and~\eqref{eq:two-point-realizability} gives the amplification floor. The fiber-diameter bound is zero for both updates because the frozen numerators differ. Projection identity, update norm, frozen error, predictability from $\Sigma$, and scalar output subspace all coincide; only realizability differs.

Key caps also change how prefix length behaves. In the same head, $|\kappa_{t}|\le K$ gives $a\in[me^{-K},me^{K}]$, so the attainable contraction interval is
\begin{equation}
t\in\left[\frac{1}{1+me^{K}},\frac{1}{1+me^{-K}}\right].
\label{eq:bounded-contraction}
\end{equation}
For a symmetric target $T(x^{\pm})=\pm a_{0}$, the exact best error is the distance from $a_{0}$ to this interval, attained with zero values. Under a fixed per-slot key cap, more slots can therefore make the optimum worse, because their unavoidable attention mass attenuates the frozen output.

\subsection{Unequal queries and a pretrained first-layer interface}
\label{sec:pretrained-interface}
The common-query condition is restrictive, but a pretrained model can meet it without any change to its weights. Suppose the first attention block receives states $x_{i}=\phi(E[t_{i}]+P[i])$, where $\phi$ is deterministic and tokenwise, as in the evaluation-mode embedding and pre-attention normalization path of GPT-2 \citep{transformers2024gpt2}. Fixing the readout token and its position then makes $x_{j}$, and hence the frozen query, identical across preceding contents, while the partition and value numerator still vary. Theorem~\ref{thm:capped-common-query} therefore applies without replacing learned weights or clamping a computed query, provided the query is nonzero or the zero-query branch is used.

Beyond the first layer, queries generally differ across inputs, and a reference query gives a controlled reduction. Let $H=\max_{X}\norm{h_{X}}_{2}$, choose any $q_{0}$, and keep each $(Z_{X},N_{X})$ unchanged when forming the common-query reference. Define
\begin{equation}
 d_{0}=\left(V+\frac{H+V}{4}\right)\frac {K}{\sqrt{d_{k}}}
       \max_{X}\norm{q_{X}-q_{0}}_{2}.
 \label{eq:reference-query-slack}
\end{equation}
\begin{proposition}[Robust common-query reference]
\label{prop:reference-sandwich}
Let $E_{*}^{G}$ be the optimal projected error on the original finite domain under the fixed $(m,K,V)$ caps, and let $E_{0}^{G}$ be the optimum of~\eqref{eq:capped-socp} for the reference summaries. Then
\begin{equation}
 [E_0^G-\norm{G}_{\rm op}d_0]_{+}\le E_{*}^{G}
 \le E_{0}^{G}+\norm{G}_{\rm op}d_{0}.
 \label{eq:reference-sandwich}
\end{equation}
The reconstructed reference-optimal prefix has error at most the displayed upper bound on the original domain.
\end{proposition}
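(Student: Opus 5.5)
The plan is a prefix-wise perturbation bound between each original input and its reference counterpart, followed by taking optima on both sides. For a fixed admissible prefix $S$ (exactly $m$ slots, caps $K,V$) and each $X\in\mathcal D$, let $\tilde X$ denote the reference input with summary $(q_{0},Z_{X},N_{X})$; it shares the partition and numerator, so $h_{\tilde X}=h_{X}$. The central claim is
\begin{equation*}
\norm{h_{S,X}-h_{S,\tilde X}}_{2}\le\left(V+\frac{H+V}{4}\right)\frac{K}{\sqrt{d_{k}}}\norm{q_{X}-q_{0}}_{2}\le d_{0}.
\end{equation*}
This is exactly the modulus $\omega$ of Theorem~\ref{thm:near-fiber} with its first two terms zero. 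The proof of that theorem only uses the prefix-decomposition form \eqref{eq:prefix-mixture} as a function of $(q,Z,N)$, so it applies verbatim to formal summaries. I will still check that $\tilde X$ need not be an actual input. Concretely, I write $h_{S,X}=(1-\rho)h_{X}+\rho g_{S}(q)$ with $\rho=\sigma(\log Z_{S}(q)-\log Z_{X})$. I interpolate $q$ linearly from $q_{0}$ to $q_{X}$ and bound the derivative by three facts. First, $\log Z_{S}$ is $K/\sqrt{d_{k}}$-Lipschitz, since its gradient is a convex combination of $\kappa_{t}/\sqrt{d_{k}}$. Second, $\sigma'\le1/4$, and $\norm{g_{S}-h_{X}}\le H+V$. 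Third, the Jacobian of $g_{S}$ is a weighted covariance with operator norm at most $VK/\sqrt{d_{k}}$, and this is multiplied by $\rho\le1$. Because none of these depend on $m$, the bound holds at every length.

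Applying $G$ gives $\norm{G(h_{S,X}-h_{S,\tilde X})}\le\norm{G}_{\rm op}d_{0}$ for all $X$. Taking the maximum over the finite domain, the projected uniform errors of $S$ on the original and reference domains differ by at most $\norm{G}_{\rm op}d_{0}$. Here the targets $T(X)$ are kept the same on both domains. I will state this explicitly in the proof, since it is what "reference summaries" means: only $q$ is changed.

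For the upper bound, Theorem~\ref{thm:capped-common-query} gives an attained optimum $(a,b)\in\mathcal A_{m}$ for the reference problem, realized by the reconstruction \eqref{eq:aggregate-reconstruction}. That prefix is admissible under the same caps. Its reference error is $E_{0}^{G}$, so its original error is at most $E_{0}^{G}+\norm{G}_{\rm op}d_{0}$. This yields both the upper bound and the final sentence of the proposition. For the lower bound, I take any admissible $S$ (or an attaining one, if attainment on the original domain is known; otherwise an infimizing sequence). Its reference error is at least $E_{0}^{G}$, because Theorem~\ref{thm:capped-common-query} shows $E_{0}^{G}$ is the optimum over all capped prefixes at $q_{0}$. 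Hence its original error is at least $E_{0}^{G}-\norm{G}_{\rm op}d_{0}$. Combining this with nonnegativity and taking the infimum gives the left side.

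The main obstacle is the covariance bound on the Jacobian of $g_{S}$. The Jacobian is $\sum_{t}p_{t}(\nu_{t}-\bar\nu)(\kappa_{t}-\bar\kappa)^{\top}/\sqrt{d_{k}}$, where $\bar\nu$ and $\bar\kappa$ are the $p$-weighted means. The naive bound would give $4VK$, not $VK$. I would first write it as $\sum_{t}p_{t}(\nu_{t}-\bar\nu)\kappa_{t}^{\top}$, i.e. not centering the keys. Then, for unit vectors $u,w$, Cauchy--Schwarz gives $\sqrt{\sum_{t} p_{t}(u^{\top}(\nu_{t}-\bar\nu))^{2}}\sqrt{\sum_{t} p_{t}(w^{\top}\kappa_{t})^{2}}\le VK$, using that the variance is at most the second moment, which is at most $V^{2}$. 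Failing that, I would simply cite the bound as established in Appendix~\ref{app:near-proof} for Theorem~\ref{thm:near-fiber}.
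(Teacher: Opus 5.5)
Your proposal is correct and follows the paper's route. The paper bounds the algebraic response $y_S(q,Z,N)$ under a change of query only, with $(Z,N)$ and the targets fixed, uniformly over capped prefixes of any length; it reuses the sigmoid and prefix-mean Lipschitz constants of Appendix~\ref{app:near-proof}, transfers the resulting $\norm{G}_{\rm op}d_0$ gap to the maximum projected errors, takes infima on both sides, and lets the reconstructed reference optimum give the upper bound. Your interpolation along the segment and your uncentered-key Cauchy--Schwarz step simply spell out the bounds that the paper cites.
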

\emph{Proof sketch.} The proof changes only the query argument of the prefix term in~\eqref{eq:prefix-mixture} and reuses the sensitivity constants of Theorem~\ref{thm:near-fiber}; the reference summaries need not come from a clamped model. When queries are widely dispersed, the lower bound can be zero, an inconclusive outcome rather than evidence of compilation.

\subsection{A direct pretrained-head evaluation}
\label{sec:pretrained-evaluation}
Together, these results define a direct evaluation of a trained adapter. The test needs only a finite input domain; the adapter need not improve any downstream task. We cache the frozen inputs, queries, keys, and values, compute the adapted target on the same inputs, and take $G$ to be the selected head's output-projection block. The unprefixed effect and normalized optimum are
\begin{equation}
 D_{\mathcal {D}}^{G}=\max_{X\in\mathcal {D}}\|G[h_{X}-T(X)]\|_{2},
 \qquad e_{*}^{G}=E_{*}^{G}/D_{\mathcal {D}}^{G},
 \label{eq:pretrained-normalized-error}
\end{equation}
when $D_{\mathcal {D}}^{G}>0$; a zero effect is reported separately rather than divided by a numerical floor. The exact capped program, its reconstructed prefix, and a gradient-trained prefix share the domain, target, caps, and slot count, so their difference measures optimization error, while comparison with $D_{\mathcal {D}}^{G}$ measures how much of the adapter effect remains uncompilable under those caps.

Solving on the evaluation targets gives the oracle finite-domain optimum, whereas fitting one prefix on separate inputs and freezing it measures transfer. For query adapters at a fixed readout state, only $BAx_{j}$ is visible, so a larger stored rank does not by itself add input-dependent query directions, and we report the realized query displacement rather than nominal rank. Appendix~\ref{app:gpt2-protocol} specifies the extraction, structural projection slices, caps, and conic residual checks, and Section~\ref{sec:pretrained-results} reports the results.

\section{A constructive boundary for value and query updates}
\label{sec:constructive}
The tests above are interface conditions, not universal negative statements: a prefix can compile a value update when the query exposes the coordinates the adapter reads. Under affine query exposure, a constant one-token self-score, and bounded adapter coordinates, the signed construction of Theorem~\ref{thm:value-compilation} (Appendix~\ref{app:value-proof}) approximates a rank-$r$ value update with $2r$ slots, although its values grow as $O(\epsilon^{-3/2})$; Section~\ref{sec:empirical} measures this precision cost. The following contrast shows, on one head, that success depends on the target rather than on the adapter's side.

\subsection{An exact same-head contrast}
\label{sec:boundary}
\begin{theorem}[Query-side separation and exact value compilation]
\label{thm:query-separation}\label{cor:same-head-contrast}
There is one fixed head and two three-token inputs with the same readout for which a rank-one value update compiles exactly into one slot, while a rank-one query update has minimax error
\begin{equation}
\eta_{\alpha}=\frac{2\sinh\alpha}{2\cosh\alpha+1},\qquad\alpha>0,
\label{eq:eta-floor}
\end{equation}
at every finite prefix length.
\end{theorem}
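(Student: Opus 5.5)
The plan is to build the head explicitly so that the two inputs form an \emph{equal-summary pair}. The query-side floor will then come from the fiber-diameter obstruction (Proposition~\ref{prop:fiber-bound}), while the value update is chosen so that its target is constant on the pair and is therefore realized by one slot through Theorem~\ref{thm:common-query}. I would work with input coordinates $(1,s,\sigma,\text{aux})$: a constant coordinate $1$, a key-sign coordinate $s$, a value coordinate $\sigma$, and a readout-marker coordinate. Set $\WK x=s\,e$ for a unit direction $e$ and $\WV x=\sigma$ (scalar, $d_v=1$). Choose $\WQ$ so that the frozen query $q_0=\WQ x_3$ is nonzero but orthogonal to $e$, for instance by reading the readout marker into a direction orthogonal to $e$. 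Then every frozen score equals $0$.

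The inputs share the readout token $x_3$, which has $s=\sigma=0$. Input $X^{+}$ carries the tokens $(s,\sigma)=(+1,+1)$ and $(-1,-1)$; input $X^{-}$ carries $(-1,+1)$ and $(+1,-1)$. For both inputs, $Z_X=3$, $N_X=1-1+0=0$, and the query is $q_0$. Hence $\Sigma(X^{+})=\Sigma(X^{-})$ and $h_{X^\pm}=0$.

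\textbf{Query side.} Take the rank-one update $\Delta\WQ$ that adds $\alpha\sqrt{d_k}\,e$ to the query through the readout-marker coordinate. The adapted scores become $s_i=\alpha s_i^{\rm key}$, so
\[
T(X^{+})=\frac{e^{\alpha}-e^{-\alpha}}{e^{\alpha}+e^{-\alpha}+1}=\eta_\alpha,\qquad T(X^{-})=-\eta_\alpha .
\]
Proposition~\ref{prop:fiber-bound} then gives error at least $\tfrac12\cdot 2\eta_\alpha=\eta_\alpha$ at every finite length. For the matching upper bound, a single slot with zero key and zero value outputs $0$ on both inputs, with error exactly $\eta_\alpha$. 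So the minimax error equals $\eta_\alpha$. The step that needs the most care is checking that $\Delta\WQ$ really has rank one and leaves the frozen keys untouched. It is rank one because it is the outer product of $e$ with the marker coordinate, and the frozen query being orthogonal to $e$ is what makes the baseline summaries coincide.

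\textbf{Value side.} Take $\Delta\WV=c\cdot(\text{constant coordinate})^{\top}$, which has rank one and adds $c\neq0$ to every value. Its target is $(N_X+3c)/3=c$ on both inputs, so the target is nontrivial and differs from the frozen output $0$. By Theorem~\ref{thm:common-query}, any $(a,b)$ with $a>0$ is realized by one slot. Choosing $a=1$ and $b=c(Z_X+a)-N_X=4c$ gives $(N_X+b)/(Z_X+a)=c$ on both inputs, with slot $\kappa=0$ and $\nu=4c$. The value update therefore compiles exactly.

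The main obstacle is arranging the construction so that both statements hold on literally the same head and inputs. The pair must share $\Sigma$, since that is what forces the query floor, and the value target must then be constant on it. Centering the frozen values ($N_X=0$) and using a bias-type value update settles both requirements together. I would close by noting that $\eta_\alpha>0$ for every $\alpha>0$, since $\sinh\alpha>0$, so the query floor is strictly positive.
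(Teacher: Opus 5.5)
Your proof is correct and is the paper's own witness with one extra constant coordinate. It uses the same equal-summary pair (swapped key signs, centered values $(1,-1,0)$, uniform frozen attention), the same rank-one query update $\alpha\sqrt{d_k}\,e\,e_{\rm marker}^{\top}$ along the key direction with the same fiber-bound floor, and a value update that makes the target constant on the pair: your bias-type update gives target $c$ and slot value $4c$, where the paper's $\Delta W^V=c\,e_3^{\top}$ gives target $c/3$ and slot value $4c/3$. To make $\eta_\alpha$ the exact minimax value at each fixed length $m$ rather than only over all finite prefixes, use $m$ zero-value slots, which again output $0$ on both inputs.
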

\emph{Proof sketch.} The witness uses $d=3,d_{k}=2,d_{v}=1$, $\WQ=e_{1}e_{3}^{\top}$, $\WK=e_{2}e_{1}^{\top}$, $\WV=e_{2}^{\top}$, and
\[
X^{+}=(e_{1}+e_{2},-e_{1}-e_{2},e_{3}),\qquad
X^{-}=(-e_{1}+e_{2},e_{1}-e_{2},e_{3}).
\]
Both inputs have summary $(e_{1},3,0)$. The query update $\Delta\WQ=\sqrt{2}\alpha e_{2}e_{3}^{\top}$ yields targets $\pm\eta_{\alpha}$, so the common prefix output errs by at least $\eta_{\alpha}$, and zero prefix values attain this floor. The value update $\Delta\WV=c e_{3}^{\top}$, by contrast, gives both targets $c/3$, which the slot $(0,4c/3)$ realizes exactly. At $\alpha=2$ (Figure~\ref{fig:obstructions}a), the floor is $0.8509$, half the target gap of $1.7019$.

This contrast does not imply a universal value--query ordering. In a separate rank-one value counterexample with $\WQ=\WK=\WV=0$, inputs $(\pm e_{1},e_{2})$, and $\Delta\WV=[1\;0]$, the summaries are equal but the targets are $\pm1/2$. Query and value adapters can therefore both fail observability, and Corollary~\ref{cor:matched-value} shows that a value adapter can also fail after passing it. Corollary~\ref{cor:output-projection} carries both statements through $\WO$.

\section{Empirical evaluation}
\label{sec:empirical}
We test four questions derived from the theory. \textbf{Q1:} does the signed construction achieve its stated tolerance under verified exposure? \textbf{Q2:} does realizability distinguish matched value updates? \textbf{Q3:} does a robust floor remain informative once exact summary equality is removed? \textbf{Q4:} how much of a trained adapter effect does the capped test leave at pretrained heads? Q1 to Q3 use explicitly defined heads, so observability is controlled rather than inferred from a task label.

\subsection{Q1: Direct construction and finite precision}
We use $x=(1,z)$, $z\in[-1,1]^{r}$, $\WQ=I$, $\WK=0$, and ranks $r\in\{1,2,4,8\}$, with a random unit base value and $B$ scaled to operator norm one. Across 20 seeds, we evaluate the analytic $2r$-slot prefix, which needs no training, on 10,000 uniform test points plus every cube vertex, reusing the same points across five tolerances and three precisions. The analytic bound is uniform over the cube, whereas the measured maximum covers only the finite test set.

\begin{table}[t]
\centering\small\setlength{\tabcolsep}{4pt}
\caption{Direct $2r$-slot construction, rank two. Each error is the mean over 20 seeds of the maximum over 10,004 test points. These are not standard errors or uniform-supremum estimates. The last column is the mean maximum prefix-value norm. Full seed-level results and sample SDs accompany the numerical data.}
\label{tab:precision}
\begin{tabular}{rrrrr}
\toprule
Tolerance & Float64 error & Float32 error & Bfloat16 error & Value norm\\
\midrule
$10^{-1}$ & $4.27\!\times\!10^{-2}$ & $4.27\!\times\!10^{-2}$ & $9.32\!\times\!10^{-2}$ & $2.38\!\times\!10^{2}$ \\
$10^{-2}$ & $4.39\!\times\!10^{-3}$ & $4.39\!\times\!10^{-3}$ & $1.93\!\times\!10^{-1}$ & $7.30\!\times\!10^{3}$ \\
$10^{-3}$ & $4.40\!\times\!10^{-4}$ & $4.37\!\times\!10^{-4}$ & $1.10\!\times\!10^{0}$ & $2.30\!\times\!10^{5}$ \\
$10^{-4}$ & $4.40\!\times\!10^{-5}$ & $8.29\!\times\!10^{-5}$ & $1.32\!\times\!10^{0}$ & $7.28\!\times\!10^{6}$ \\
$10^{-6}$ & $4.40\!\times\!10^{-7}$ & $1.09\!\times\!10^{-3}$ & $1.32\!\times\!10^{0}$ & $7.28\!\times\!10^{9}$ \\
\bottomrule
\end{tabular}
\end{table}

In float64, all $400/400$ rank--seed--tolerance cases pass the sampled tolerance check. At rank two, reducing the tolerance from $10^{-2}$ to $10^{-6}$ lowers the float64 mean sampled maximum from $4.39\times10^{-3}$ to $4.40\times10^{-7}$, while the mean maximum value norm grows from $7.30\times10^{3}$ to $7.28\times10^{9}$, at the predicted $\epsilon^{-3/2}$ rate. Lower precision breaks this limit: the float32 error at $10^{-6}$ is $1.09\times10^{-3}$, and bfloat16 passes only $38/400$ cases, falling from 20/100 at rank one to 0/100 at rank eight (Figure~\ref{fig:precision}). A constant prefix length therefore does not imply stable finite-precision compilation. The failure concerns this implementation of the signed construction, not the existence theorem or every possible prefix.

\begin{figure}[t]
\centering
\includegraphics[width=\linewidth]{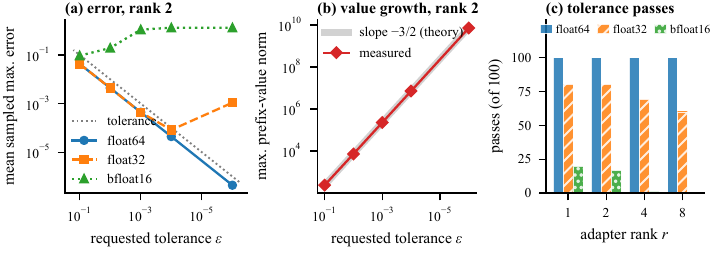}
\caption{Finite precision and the signed construction. (a) Mean sampled maximum error at rank two over 20 seeds; the dotted line is the requested tolerance. (b) Mean maximum prefix-value norm against the $\epsilon^{-3/2}$ rate of Theorem~\ref{thm:value-compilation}. (c) Tolerance passes per rank among 100 seed--tolerance cases. The prefix keeps $2r$ slots throughout.}
\label{fig:precision}
\end{figure}

\subsection{Q2: Matched target effects and the cost of extra slots}
\begin{figure}[t]
\centering
\includegraphics[width=\linewidth]{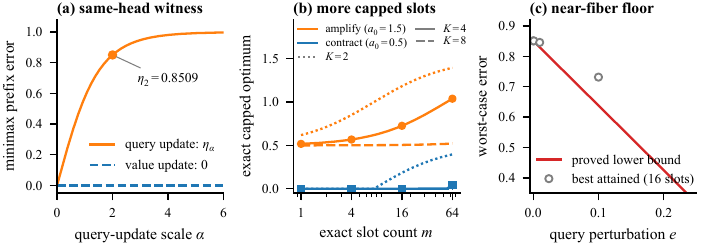}
\caption{Structural obstructions. (a) Same-head witness of Theorem~\ref{thm:query-separation}: the rank-one query update has minimax error $\eta_\alpha$, while the value update compiles exactly. (b) Exact capped optimum of Corollary~\ref{cor:matched-value} against the exact slot count; markers are Table~\ref{tab:common-query} ($K=4$). (c) Near-fiber lower bound of Theorem~\ref{thm:near-fiber} with the best attained errors of Table~\ref{tab:near-fiber}.}
\label{fig:obstructions}
\end{figure}
The contraction and amplification targets of Corollary~\ref{cor:matched-value} share the frozen head, rank, update norm, and unprefixed error; both are determined by $\Sigma$, and both have a zero fiber-diameter bound. Their bounded-key comparison is the scalar case of Theorem~\ref{thm:capped-common-query}. We compute the optimum from~\eqref{eq:bounded-contraction} and instantiate an attaining prefix with common keys and zero values. Across three key caps and four lengths ($24$ target--cap--length cases), direct attention evaluation matches the formula within $10^{-12}$.

At key cap four (Table~\ref{tab:common-query}), one slot realizes the contraction exactly, but 64 slots cannot do better than $0.0396$; for the amplifying target, the optimum rises from $0.5180$ to $1.0396$. Optimization plays no role: a fixed norm cap keeps each added slot's attention mass bounded away from zero. Without the cap, the contraction compiles and the amplification keeps the exact floor $1/2$ at every length. Figure~\ref{fig:obstructions}b extends this to all three caps: at $K=2$ even the contraction stops compiling from eight slots on, whereas at $K=8$ the amplification stays within $0.03$ of its uncapped floor up to 64 slots.

\subsection{Q3: Near-collisions with optimized bounded prefixes}
We next perturb the query witness in a fourth input coordinate, so that $q^{\pm}=(1,\pm e)$ while the frozen partitions and numerators remain equal. The adapted targets become $\eta_{2+e/\sqrt{2}}$ and $-\eta_{2-e/\sqrt{2}}$, and for every $e>0$ the exact-fiber bound is zero because the queries differ. Theorem~\ref{thm:near-fiber} still gives a positive lower bound with key cap $K=2$ and value cap $V=1$.

To compare this bound with what prefixes attain, we optimize a shared 16-slot prefix on the two inputs with projected Adam (1000 steps, ten initializations), minimizing the worst absolute error over the pair. Attained errors give an upper reference on the optimum, not a proof that the bound is sharp (Table~\ref{tab:near-fiber}). At $e=0.01$, the bound is $0.8297$ and the best attained error $0.8457$. At $e=0.1$, the bound is still $0.6384$ despite unequal summaries, and over a continuum of perturbations it decays linearly, staying positive until $e\approx0.398$ (Figure~\ref{fig:obstructions}c). The information restriction is therefore robust: additional bounded slots cannot remove it.

\begin{table}[t]
\centering\small
\caption{Near-fiber experiment with 16 slots, key norm at most two, and value norm at most one. The last column is the mean $\pm$ sample SD of best-iterate errors over ten optimizer initializations on the same pair, not independent datasets.}
\label{tab:near-fiber}
\begin{tabular}{rrrr}
\toprule
Query perturbation $e$ & Proved lower bound & Best attained error & Mean attained error\\
\midrule
0 & $0.8509$ & $0.8509$ & $0.8510\,\pm\,0.0001$ \\
0.001 & $0.8488$ & $0.8507$ & $0.8508\,\pm\,0.0000$ \\
0.01 & $0.8297$ & $0.8457$ & $0.8467\,\pm\,0.0005$ \\
0.1 & $0.6384$ & $0.7315$ & $0.7346\,\pm\,0.0040$ \\
\bottomrule
\end{tabular}
\end{table}

\subsection{Q4: Pretrained first-layer heads}
\label{sec:pretrained-results}
Finally, we apply the capped test of Section~\ref{sec:pretrained-evaluation} to value and query adapters of ranks one and four at heads 0, 4, and 8 of the first GPT-2 attention block, with exactly four slots, the head's output block $G=\WO$, and 128 fitting and 128 evaluation contexts (Appendix~\ref{app:gpt2-protocol}). The conic optimum leaves between 18.4\% and 74.2\% of the projected adapter effect uncompiled (Appendix Figure~\ref{fig:pretrained}a), and which side compiles better depends on the head: at rank one, head 0 leaves $0.184$ of a value effect and $0.631$ of a query effect, whereas head 4 leaves $0.683$ and $0.267$. Rank four raises the residual fraction in five of the six head--target pairs. The best of ten learned prefixes stays above the conic optimum by only $0.025$ to $0.040$ of the effect, so the residual reflects the resource limit rather than optimizer error. On held-out contexts (Figure~\ref{fig:pretrained}b), a conic prefix fitted only on fitting inputs adds $0.065$ to $0.067$ over the evaluation oracle, and a learned prefix adds a further $0.047$ to $0.051$.

\section{Limitations}
\label{sec:limitations}
All results concern one head under a fixed-state, fixed-position independent key--value interface; changes to other heads, earlier layers, or positions may alter an obstruction, and a head-level floor need not reach the output tokens. The capped test is exact only at a common query, and its reference-query slack can be uninformative elsewhere. The signed construction requires affine query exposure and a constant self-score. The pretrained comparison covers three first-layer GPT-2 heads, two ranks, and fixed caps. We leave prefixes that also change earlier layers to future work.

\section{Discussion and Conclusion}
\label{sec:conclusion}
We study when one shared prefix can replace a given low-rank adapter at a frozen attention head, and answer with three tests decided per target (Figure~\ref{fig:overview}). Observability reduces to a summary of three quantities beyond which no prefix can distinguish inputs; realizability at a common query becomes an exact, attained second-order-cone program, also after the output projection; and under query exposure, $2r$ signed slots compile a rank-$r$ value update to any tolerance. The tests explain why the adapter's side does not decide the outcome: equal-norm value updates fall on opposite sides, and the value--query ordering changes across pretrained heads, which turns the relative-attention invariant of \citet{petrov2024when} into a target-level criterion. Because a fixed first-layer readout meets the common-query condition in GPT-2, the exact test applies to pretrained heads without modifying their weights and separates the resource limit from optimizer error. In practice, a compiler can run the test on the target and resource budget of interest, check held-out transfer separately, and compile the adapter into a prefix when both pass. Future work should extend the exact test beyond a common query and ask whether soft tokens or natural demonstrations reach the capped optimum.

\section*{Reproducibility statement}
The appendices give complete proofs, witness matrices, the capped feasibility and reconstruction procedure, and every controlled experimental setting. The first-layer argument is derived from the pinned architecture implementation. The accompanying numerical scenario file fixes every assumed adapter effect and approximation error and checks their normalizations and ordering. These inputs are not extracted pretrained measurements. All 400-case construction results and the bounded and near-fiber tables retain their stated numerical units and replication structure.
\section*{AI use statement}
Generative AI assisted with literature discovery, mathematical exposition, code preparation, numerical analysis, and language editing. The authors remain responsible for the claims, experimental interpretation, and final submission.
\bibliographystyle{iclr2027_conference}
\bibliography{references}
\clearpage
\appendix
The appendices follow the main text. Appendix~\ref{app:notation} collects the notation, and Appendix~\ref{app:related} extends the related work. Appendices~\ref{app:proofs} and~\ref{app:near-proof} prove the results of Section~\ref{sec:statistic}, Appendices~\ref{app:common-proof} and~\ref{app:capped-proof} those of Section~\ref{sec:realizability}, and Appendices~\ref{app:value-proof} and~\ref{app:query-proof} those of Section~\ref{sec:constructive}. Appendices~\ref{app:controlled} and~\ref{app:gpt2-protocol} give the protocols behind Section~\ref{sec:empirical}.

\section{Notation}
\label{app:notation}
Table~\ref{tab:notation} lists the symbols used in the main text, grouped by the section that introduces them. One clash is deliberate and local: the bandwidth $h$ of the signed construction (Theorem~\ref{thm:value-compilation}) appears only in Appendix~\ref{app:value-proof} and Algorithm~\ref{alg:compile}, whereas subscripted $h_{X}$ and $h_{S,X}$ always denote head outputs.

\begin{table}[h]
\centering\small
\caption{Notation, grouped by the part of the paper that introduces each symbol.}
\label{tab:notation}
\begin{tabular}{@{}p{0.3\linewidth}p{0.64\linewidth}@{}}
\toprule
Symbol & Meaning\\
\midrule
\multicolumn{2}{@{}l}{\emph{Attention interface}}\\
$d,\ d_{k},\ d_{v},\ d_{o}$ & input, key, value, and output dimensions\\
$\WQ,\WK,\WV,\WO$ & frozen query, key, value, and output-projection weights\\
$X=(x_{1},\ldots,x_{j})$ & content states up to the readout position $j$\\
$q,\ k_{i},\ v_{i},\ s_{i}$ & readout query, content keys and values, scaled logits\\
$Z_{X},\ N_{X},\ h_{X}$ & partition function, value numerator, frozen head output\\
$S=\{(\kappa_{t},\nu_{t})\}_{t=1}^{m}$ & independent key--value prefix with $m$ slots\\
$Z_{S}(q),\ N_{S}(q)$ & prefix mass and prefix numerator at query $q$\\
$h_{S,X}$ & prefixed head output\\
$T=T_{\Delta}$ & adapter target, the head output after the update $\Delta$\\
$\Delta\WV=BA$;\ $B,A,r$ & value LoRA factors and rank\\
$\mathcal{D},\ \mathcal{E}_{\mathcal{D}}(S,T)$ & input domain and uniform prefix error\\
$\PKV$ & all finite nonempty independent key--value prefixes\\
\midrule
\multicolumn{2}{@{}l}{\emph{Observability}}\\
$\rho,\ g_{S}(q)$ & prefix mixing weight and mean prefix value\\
$\Sigma(X)=(q,Z_{X},N_{X})$ & content summary; equal $\Sigma$ defines a $\Sigma$-fiber\\
$K,\ V$ & per-slot caps on key and value norms\\
$H$ & bound on frozen output norms $\norm{h_{X}}_{2}$\\
$\omega(X,X')$ & summary modulus of a pair\\
\midrule
\multicolumn{2}{@{}l}{\emph{Realizability}}\\
$q_{0}$ & common (or reference) query\\
$a,\ b$ & aggregate prefix mass and numerator at $q_{0}$\\
$Z_{0},\ t,\ c$ & common partition, contraction factor, translation\\
$L,\ \mathcal{A}_{m}$ & key-cap logit range and attainable aggregate set\\
$G$ & fixed linear map on head outputs, e.g.\ $G=\WO$\\
$\epsilon$ & tolerance of a feasibility test or construction\\
$a_{0}$ & amplitude of a symmetric scalar target\\
$E_{*}^{G},\ E_{0}^{G},\ d_{0}$ & capped optimum, reference optimum, query slack\\
$D_{\mathcal{D}}^{G},\ e_{*}^{G}$ & unprefixed adapter effect and normalized optimum\\
\midrule
\multicolumn{2}{@{}l}{\emph{Construction and witnesses}}\\
$\alpha,\ \eta_{\alpha}$ & query-update scale and its minimax error floor\\
$u_{0},\ldots,u_{r},\ s_{0}$ & query-exposure vectors and constant self-score\\
$M,\ C_{0}$ & bounds on adapter coordinates and frozen values\\
$\delta,\ h,\ \beta,\ \gamma$ & mass fraction, bandwidth, logit intercept, value scale\\
$e$ & query perturbation in the near-fiber experiment\\
\bottomrule
\end{tabular}
\end{table}

\section{Extended related work}
\label{app:related}
\paragraph{Implicit updates and context-to-weight conversion.} In-context learning stores task information in activations supplied at inference time, whereas LoRA stores it in a parameter update \citep{brown2020language,hu2022lora}. Linear and simplified constructions show that transformers can implement least-squares or gradient-based updates over demonstrations \citep{garg2022can,akyurek2023what,vonoswald2023transformers,mahankali2024one,ahn2024transformers}, and statistical analyses study task selection, generalization, and sample complexity \citep{xie2022explanation,bai2023transformers,wies2023learnability,li2023transformersalgorithms}. \citet{dai2023why} interpret attention as a dual form of gradient descent. Several works convert context into weights: \citet{chen2024exact} obtain an exact conversion for linearized attention by adding bias terms, \citet{dherin2025learning} derive context-dependent low-rank updates to an MLP inside a transformer block, \citet{mazzawi2025transmuting} aggregate prompt effects into reusable weight-space interventions, and \citet{liu2026shine} map context to LoRA through a learned hypernetwork. Latent context compilation instead uses a temporary adapter to produce compact portable tokens \citep{li2026latent}. All of these run from context to parameters; we study the reverse direction, whether a fixed adapter can be replaced by one nonadaptive prefix at a frozen head.

\paragraph{Prompt expressivity and memorization limits.} \citet{wang2026prefixmemory} identify the competition for attention mass implied by the prefix decomposition and move the prefix outside the attention head. \citet{wang2023universality} give universality and finite-depth limitation results and compare prompt parameters with low-rank updates, and \citet{meyer2025memory} show that the information a prompt can memorize grows at most linearly in its length. Further results extend universality or quantify memorization limits under other assumptions \citep{petrov2024prompting,hu2025fundamental,wang2025memorization,nakada2025theoretical,hsu2026trainingfree}. These results concern model classes or datasets. Our tests instead fix one head, the prefix interface, and a declared adapter target, and the observability floor holds at every prefix length.

\paragraph{Parameter-efficient adaptation and task representations.} LoRA constrains an update to a low-rank factorization whose expressive power has been characterized \citep{zeng2024expressive}, and it belongs to a broader family of parameter-efficient methods \citep{houlsby2019parameter,han2024parameter,liu2022fewshot,zhang2023adaptive,liu2024dora,lialin2024scaling}. Prompt-induced behavior can also be represented through task or function vectors \citep{hendel2023incontext,todd2024function}. Adaptive task vectors are input-dependent and are argued to match LoRA expressivity under a rank-matched construction \citep{kang2025adaptive}; such an input-dependent interface lies outside our setting, which requires one prefix shared by all inputs. Induction-head analyses explain how repeated patterns are copied in context \citep{olsson2022incontext,elhage2021mathematical}, and retrieval supplies factual information through context rather than parameters \citep{lewis2020retrieval}. Empirically, neither in-context learning nor fine-tuning dominates: demonstration labels, ordering, and format matter \citep{min2022rethinking,liu2022makes,pan2023incontext}, parameter-efficient tuning can be cheaper and stronger in few-shot settings \citep{liu2022fewshot}, and other controlled tasks favor in-context generalization \citep{yin2024deeper}. \citet{dong2024survey} survey the wider field.

\section{Proofs of observability results}
\label{app:proofs}\label{app:summary-proofs}
\subsection{Prefix decomposition and completeness}
Splitting~\eqref{eq:prompt-head} into its content and prefix contributions gives
\[
\frac{N_{X}}{Z_{X}+Z_{S}}+\frac{N_{S}}{Z_{X}+Z_{S}}
=\frac{Z_{X}}{Z_{X}+Z_{S}}h_{X}+\frac{Z_{S}}{Z_{X}+Z_{S}}g_{S}(q),
\]
which proves Lemma~\ref{lem:prefix-decomposition}. The only content-dependent arguments are $q,Z_{X},N_{X}$.

Suppose all one-slot interventions have identical outputs on two inputs. For the zero key, the exponential mass is one, so
\[
\frac{N+\nu}{Z+1}=\frac{N'+\nu}{Z'+1}\quad\text{for every }\nu\in\R^{d_{v}}.
\]
Equality of the coefficients of $\nu$ gives $Z=Z'$, and the constant terms give $N=N'$. For an arbitrary key $\kappa$, let $a=\exp(q^{\top}\kappa/\sqrt{d_{k}})$, and define $a'$ analogously. The coefficients of $\nu$ now give $a/(Z+a)=a'/(Z+a')$. Strict monotonicity in the positive argument implies $a=a'$. Taking logarithms yields $(q-q')^{\top}\kappa=0$ for every $\kappa$, hence $q=q'$. The converse follows immediately from~\eqref{eq:prompt-head}.

\subsection{Fiber diameter and relative attention}
For an equal-summary pair, write its common prefixed output as $y_{S}$. Then
\[
\norm{T(X)-T(X')}_{2}\le\norm{T(X)-y_{S}}_{2}+\norm{y_{S}-T(X')}_{2}
\le2\mathcal {E}_{\mathcal {D}}(S,T).
\]
Taking the supremum over pairs and the infimum over prefixes proves Proposition~\ref{prop:fiber-bound}. The factor one-half is exact for the symmetric scalar witness in Theorem~\ref{thm:query-separation}. If the supremum is zero, the proposition gives only a zero lower bound; it makes no positive realizability claim.

The same denominator cancellation recovers the established relative-content attention invariant \citep{petrov2024when}. For any two content indices $i,\ell\le j$,
\begin{equation}
\frac{a^{S}_{ji}}{a^{S}_{j\ell}}=\frac{e^{s_{i}}/(Z_{X}+Z_{S})}{e^{s_{\ell}}/(Z_{X}+Z_{S})}
=e^{s_{i}-s_{\ell}}=\frac{a_{ji}}{a_{j\ell}}.
\label{cor:ratio-invariance}
\end{equation}
An invariant attention ratio is not by itself an invariant output because prefix values can compensate. The common-summary argument removes that ambiguity by equating every quantity entering the response.

\section{Proof of the near-fiber bound}
\label{app:near-proof}
We prove Theorem~\ref{thm:near-fiber} by bounding, uniformly over capped prefixes, how each factor of the mixture~\eqref{eq:prefix-mixture} moves with the summary.

Let $w_{t}(q)$ be the softmax weights on prefix slots alone. The function $\log Z_{S}(q)$ has gradient $\sum_{t} w_{t}\kappa_{t}/\sqrt{d_{k}}$, whose norm is at most $K/\sqrt{d_{k}}$. The mean prefix value is $g_{S}(q)=\sum_{t} w_{t}\nu_{t}$, so $\norm{g_{S}(q)}\le V$. For unit vectors $u\in\R^{d_{v}}$ and $v\in\R^{d_{k}}$,
\[
u^{\top}(Dg_{S}(q))v=\frac{1}{\sqrt{d_{k}}}\operatorname{Cov}_{w(q)}(u^{\top}\nu_{t},v^{\top}\kappa_{t}).
\]
Cauchy--Schwarz and $\operatorname{Var}(u^{\top}\nu_{t})\le\E(u^{\top}\nu_{t})^{2}\le V^{2}$, with the analogous bound $K^{2}$, imply $\norm{Dg_{S}(q)}_{\rm op}\le VK/\sqrt{d_{k}}$. Integration along the segment between queries makes $g_{S}$ Lipschitz with this constant, independent of $m$.

Write $\rho=\sigma(\log Z_{S}(q)-\log Z_{X})$, where $\sigma$ is the logistic sigmoid. Since $\sup|\sigma'|=1/4$,
\[
|\rho-\rho'|\le\tfrac{1}{4}\left(\frac {K}{\sqrt{d_{k}}}\norm{q-q'}_{2}+|\log Z_{X}-\log Z_{X'}|\right).
\]
Using $y=(1-\rho)h+\rho g$ and the primed analogue,
\[
y-y'=(1-\rho)(h-h')+\rho(g-g')+(\rho-\rho')(g'-h').
\]
The triangle inequality, $0\le\rho\le1$, and $\norm{g'-h'}\le V+H$ prove~\eqref{eq:summary-modulus}. Finally,
\[
\norm{T-T'}\le\norm{T-y}+\norm{y-y'}+\norm{y'-T'}
\le2\max\{\norm{T-y},\norm{T'-y'}\}+\omega.
\]
Rearrangement and nonnegativity prove~\eqref{eq:near-fiber-bound}. Exact summary equality sets $\omega=0$ and recovers the original pairwise fiber bound. Without bounds on keys and values, the derivative argument supplies no uniform modulus over prefixes.

\section{Proofs of exact realizability results}
\label{app:common-proof}
This appendix proves Theorem~\ref{thm:common-query} and the two-input and bounded-key statements behind Corollary~\ref{cor:matched-value}.
\subsection{Common query and partition}
For a common query $q_{0}\ne0$, any prefix determines two shared quantities $a=Z_{S}(q_{0})>0$ and $b=N_{S}(q_{0})$. Thus its responses have the form on the right of~\eqref{eq:common-query-reduction}. Conversely, the one-slot key $\sqrt{d_{k}}\log(a)q_{0}/\norm{q_{0}}^{2}$ has mass $a$, and value $b/a$ has numerator $b$. The two infima are identical.

For a fixed tolerance $\epsilon$, multiplying by the positive denominator $Z_{X}+a$ gives the stated second-order-cone inequalities. Their left sides are norms of affine functions of $(a,b)$ and their right sides are positive affine functions on $a>0$. Sublevel feasibility is therefore convex. The strict positivity restriction matters for infima at $a\downarrow0$; numerical implementations can use positive lower bounds and inspect limits rather than claiming that an unattained boundary point is a finite prefix.

If $Z_{X}=Z_{0}$, every prefix has $t=Z_{0}/(Z_{0}+a)\in(0,1)$ and $c=b/(Z_{0}+a)$. Every such $(t,c)$ is realized because $a=Z_{0}(1-t)/t$ and $b=cZ_{0}/t$. On the finite domain, or a bounded frozen-output domain, $t\downarrow0$ and $t\uparrow1$ give uniform limits with arbitrary fixed $c$. These are precisely the additional maps in the closure. If $q_{0}=0$, all slot logits are zero, so $Z_{S}=m$ and this continuous mass characterization does not hold.

\subsection{Two-point minimax formula}
For fixed $t$, the two residuals before translation are $e^{\pm}=T(X^{\pm})-t h_{X^{\pm}}$. Any translation $c$ incurs maximum error at least $\norm{e^{+}-e^{-}}/2$ by the triangle inequality. Choosing $c=(e^{+}+e^{-})/2$ attains this value. Minimizing over the compact interval $[0,1]$ gives~\eqref{eq:two-point-realizability}. Endpoint optima are interpreted through the prefix closure and need not be attained by finite parameters.

For the matched updates, $T^{\pm}=\pm a_{0}$ and $h^{\pm}=\pm1$. The minimax formula reduces to $\min_{0\le t\le1}|a_{0}-t|$. It equals zero at $a_{0}=.5$ and $.5$ at $a_{0}=1.5$. Each matrix update is a nonzero $1\times2$ row, hence rank one; its Frobenius norm is $.5$. The unprefixed maximum error is also $.5$ in both cases. Since $N=\pm1$, the two summaries differ, so no nonidentical equal-summary pair contributes to~\eqref{eq:fiber-bound}.

With $m$ slots and scalar keys in $[-K,K]$, the prefix mass belongs to $[me^{-K},me^{K}]$. Every intermediate mass is attained using the common key $\log(a/m)$. Therefore the contraction interval in~\eqref{eq:bounded-contraction} is exact. For a symmetric target, the optimal translation is zero, and zero values realize it. The optimal $t$ is the projection of $a_{0}$ onto that interval. This proves both the bounded optimum and the construction used in the experiment.

\section{Capped feasibility, reconstruction, and reference-query bounds}
\label{app:capped-proof}
This appendix proves Theorem~\ref{thm:capped-common-query}, turns it into a numerically safe test, and proves Proposition~\ref{prop:reference-sandwich}.
\subsection{Exact attainable aggregates}
For $q_{0}\ne0$, Cauchy--Schwarz gives
$|q_{0}^{\top}\kappa_{t}|/\sqrt{d_{k}}\le L$. Summing the exponential masses yields the interval in~\eqref{eq:capped-aggregates}. The triangle inequality gives
\[
 \norm{b}_{2}=\left\|\sum_{t} e^{q_{0}^{\top}\kappa_{t}/\sqrt{d_{k}}}\nu_{t}\right\|_{2}
 \le V\sum_{t} e^{q_{0}^{\top}\kappa_{t}/\sqrt{d_{k}}}=Va.
\]
Conversely, for any $(a,b)\in\mathcal {A}_{m}$, Equation~\eqref{eq:aggregate-reconstruction} has
\[
 \norm{\kappa_{t}}_{2}=\frac{\sqrt{d_{k}}|\log(a/m)|}{\norm{q_{0}}_{2}}\le K,
 \qquad \norm{\nu_{t}}_{2}=\norm{b}_{2}/a\le V.
\]
Its per-slot mass is exactly $a/m$, so summing gives $(Z_{S},N_{S})=(a,b)$. When $q_{0}=0$, every slot mass is one regardless of its key; thus $a=m$, and zero keys with $\nu_{t}=b/m$ realize the full ball $\norm {b}\le Vm$. These arguments include $K=0$ and $V=0$.

The set $\mathcal {A}_{m}$ is nonempty, convex, and compact. Each $Z_{X}>0$, so the objective
\[
 \max_{X}\left\|G\left[\frac{N_{X}+b}{Z_{X}+a}-T(X)\right]\right\|_{2}
\]
is continuous on that set and attains its minimum. At fixed $\epsilon$, multiplication by the positive denominator gives~\eqref{eq:capped-socp}; its left side is a norm of an affine function and its right side is positive and affine. The cap $\norm {b}\le Va$ is also a second-order-cone constraint. This proves Theorem~\ref{thm:capped-common-query}. The linear map $G$ may be rank deficient. Nullspace directions disappear from the error objective but not from the original value cap.

\subsection{A numerically explicit feasibility procedure}
\begin{algorithm}[t]
\caption{Capped finite-domain prefix test}
\label{alg:capped-socp}
\begin{algorithmic}[1]
\Require finite summaries $(q_{0},Z_{i},N_{i})$, targets $T_{i}$, slots $m$, caps $K,V$, map $G$, tolerance $\xi>0$
\State define $\mathcal {A}_{m}$ using~\eqref{eq:capped-aggregates}, or $a=m$ if $q_{0}=0$
\State choose $a_{0}=me^{-L}$, $b_{0}=0$; retain $(a,b)=(a_{0},b_{0})$ and set $\epsilon_{\rm lo}=0$
\State set $\epsilon_{\rm hi}=\max_{i}\|G[N_{i}/(Z_{i}+a_{0})-T_{i}]\|_{2}$
\While{$\epsilon_{\rm hi}-\epsilon_{\rm lo}>\xi$}
\State $\epsilon\gets(\epsilon_{\rm hi}+\epsilon_{\rm lo})/2$
\State solve feasibility of~\eqref{eq:capped-socp} at $\epsilon$
\If{a feasible pair is verified by direct residual evaluation}
\State retain $(a,b)$; tighten $\epsilon_{\rm hi}$ using its directly evaluated maximum error
\ElsIf{a validated infeasibility certificate is obtained}
\State $\epsilon_{\rm lo}\gets\epsilon$
\Else
\State stop and return the unresolved interval, not an impossibility conclusion
\EndIf
\EndWhile
\State reconstruct the feasible prefix by~\eqref{eq:aggregate-reconstruction}; evaluate its actual attention error
\end{algorithmic}
\end{algorithm}
The zero-key case in the initialization line uses $L=0$. The displayed optimum is a mathematical characterization. Floating-point solver status alone is not an exact lower-bound certificate: a rigorous numerical exclusion needs a dual infeasibility certificate or validated residual bounds. Report primal feasibility residuals, the bisection interval, solver tolerances, and any unresolved cells rather than treating a solver failure as infeasibility.

A common numerical rescaling of all $Z_{i},N_{i},a,b$ leaves the responses unchanged and must also rescale the mass interval. Independently shifting the logits for different examples rescales their content summaries differently and does not preserve a single common $(a,b)$; it cannot be done without tracking those example-specific factors in the constraints. For a count budget of at most $m$, solve each count $k=1,\ldots,m$ and take the best value. When permitted, the zero-slot option is the frozen output. The minimum over these nested count sets is nonincreasing; any deterioration reported for exactly $m$ slots does not contradict this monotonicity.

\subsection{Compactifying the unconstrained common-query infimum}
\label{app:unconstrained-compactification}
For Theorem~\ref{thm:common-query} without caps, choose an arbitrary reference mass $Z_{\rm ref}>0$ and define
\[
 t=\frac{Z_{\rm ref}}{Z_{\rm ref}+a}\in(0,1),\qquad
 c=\frac {b}{Z_{\rm ref}+a},\qquad
 d_{i}(t)=1+t\left(\frac{Z_{i}}{Z_{\rm ref}}-1\right).
\]
Then
\[
 \frac{N_{i}+b}{Z_{i}+a}=\frac{tN_{i}/Z_{\rm ref}+c}{d_{i}(t)}.
\]
For $0\le t\le1$, $d_{i}(t)\ge\min\{1,Z_{i}/Z_{\rm ref}\}>0$. Closing the interval to $[0,1]$ therefore includes the limiting constant maps at $a\to\infty$ and the finite-output limits at $a\downarrow0$. The fixed-tolerance constraints become
\[
 \left\|G\left[tN_{i}/Z_{\rm ref}+c-d_{i}(t)T_{i}\right]\right\|_{2}
 \le\epsilon d_{i}(t),\qquad 0\le t\le1,
\]
which are again second-order-cone constraints. Interior solutions reconstruct a finite prefix; endpoint solutions may describe only an infimum. For $G=I$, one input's bounded residual bounds $c$ on every objective sublevel set. For a rank-deficient $G$, restrict $c$ to the orthogonal complement of $\ker G$ without changing projected responses; on this space the same compactness argument applies. Thus the closed formulation attains the projected infimum and records whether a finite-parameter realization was actually found.

\subsection{Proof of the reference-query sandwich}
Define the algebraic response $y_{S}(q,Z,N)$ by~\eqref{eq:prompt-head}, even for a summary not realized by a content sequence. For fixed $h=N/Z$, changing only $q$ gives
\[
 \|y_{S}(q,Z,N)-y_{S}(q_{0},Z,N)\|_{2}
 \le\left(V+\frac{H+V}{4}\right)\frac {K}{\sqrt{d_{k}}}\|q-q_{0}\|_{2},
\]
using the prefix-mean Lipschitz constant and sigmoid derivative from Appendix~\ref{app:near-proof}. The bound holds for every prefix satisfying the caps. Consequently, its maximum target errors on the original and reference summaries differ by at most $\norm{G}_{\rm op}d_{0}$. Taking infima gives both sides of~\eqref{eq:reference-sandwich}. A prefix realizing the reference optimum obeys the same uniform inequality, which proves the upper-bound guarantee after reconstruction. No assumption that the reference summaries arise from real activations is used.

This argument also supplies a sharper feasibility exclusion when query distances vary substantially. For a candidate tolerance $\epsilon$ on the original domain, replace the common tolerance on input $i$ by $\epsilon+\norm{G}_{\rm op}d_{i}$, where
\[
 d_{i}=\left(V+\frac{\norm{h_{i}}_{2}+V}{4}\right)\frac {K}{\sqrt{d_{k}}}\|q_{i}-q_{0}\|_{2}.
\]
Any feasible original prefix induces a feasible aggregate pair for these relaxed constraints. Their infeasibility therefore rules out the original tolerance, whereas feasibility alone does not prove it. The relaxation remains an SOCP at fixed $\epsilon$.

\subsection{First-layer applicability and experimental separation}
The pinned GPT-2 implementation forms input states from token and positional embeddings, applies deterministic evaluation-mode dropout and tokenwise pre-attention layer normalization, and computes its first-block affine projections \citep{transformers2024gpt2}. Fixing a readout token and its explicit position therefore fixes its query regardless of preceding token identities. Projection biases are fixed and do not affect this equality. Cached content keys, values, and masks still depend on the content. A structural row slice of a fused query--key--value projection must be verified before labeling an adapter query-only or value-only.

For a single changed head, $G=\WO$ evaluates that head's contribution to the residual stream with the other contributions fixed. A positive local optimum can be erased downstream and does not imply a token-level impossibility. The executable evaluation in Appendix~\ref{app:gpt2-protocol} specifies the saved checkpoint and adapter states, domain construction, separate fitting and evaluation inputs, query checks, and solver outputs. Its unexecuted status is distinct from the controlled results in Section~\ref{sec:empirical}.

\section{Proof of the signed value construction}
\label{app:value-proof}
The contraction obstruction does not preclude compilation when the query exposes the coordinates needed for a correction. The following assumption makes that exposure explicit.
\begin{assumption}[Affine query exposure]
\label{ass:control}
Let $\mathcal {K}\subset\R^{d}$ be nonempty and compact, with $z(x)=Ax\in[-M,M]^{r}$. At the one-token input $(x)$, the self-score is constant $s_{0}$. There are $u_{0},u_{1},\ldots,u_{r}\in\R^{d_{k}}$ such that $u_{0}^{\top}\WQ x=1$ and $u_{i}^{\top}\WQ x=(Ax)_{i}$ on $\mathcal {K}$. Also $\sup_{x\in\mathcal {K}}\norm{\WV x}_{2}\le C_{0}$.
\end{assumption}
A concrete instance is $x=(1,z)$, $\WQ=I$, $\WK=0$, and $A$ selecting $z$. The assumption is not automatic at a pretrained head and does not hold merely because the adapter is value-side.

\begin{theorem}[Two slots per exposed value direction]
\label{thm:value-compilation}
Under Assumption~\ref{ass:control}, every declared update $\Delta\WV=BA$ can be approximated uniformly on $\mathcal {K}$ to any $\epsilon>0$ by one independent-KV prefix with exactly $2r$ slots. For $\delta,h>0$, take
\begin{align}
\beta&=\log\frac{\delta e^{s_{0}}}{2r},\qquad
\gamma=\frac{r(1+\delta)}{\delta h},\nonumber\\
\kappa_{i}^{\pm}&=\sqrt{d_{k}}(\beta u_{0}\pm h u_{i}),\qquad
\nu_{i}^{\pm}=\pm\gamma B_{:i}.
\label{eq:constructive-prefix}
\end{align}
When $hM\le1$, its uniform error is bounded by
\begin{align}
\mathcal {E}\le{}&\delta\cosh(1)C_{0}+
\delta(\cosh(1)-1)\norm {B}_{\rm op}\sqrt {r} M\nonumber\\
&+(1+\delta)\norm {B}_{\rm op}\frac{\sqrt {r}\,h^{2}M^{3}\cosh(1)}{6}.
\label{eq:value-bound-main}
\end{align}
Choosing $\delta=\Theta(\epsilon)$ and $h=\Theta(\sqrt{\epsilon})$ gives $\mathcal {E}\le\epsilon$ with $\gamma=O(\epsilon^{-3/2})$.
\end{theorem}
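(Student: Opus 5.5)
The plan is to compute the prefixed output in closed form on the one-token domain, compare it with the target term by term, and then choose $\delta$ and $h$.

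At the one-token input $(x)$ we have $Z_{X}=e^{s_{0}}$ and $N_{X}=e^{s_{0}}\WV x$, so $h_{X}=\WV x$. The target is $T(x)=\WV x+Bz$ with $z=Ax\in[-M,M]^{r}$.

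\textbf{Prefix logits and aggregates.} By Assumption~\ref{ass:control}, $u_{0}^{\top}q=1$ and $u_{i}^{\top}q=z_{i}$. Hence the scaled logit of slot $\kappa_{i}^{\pm}$ in~\eqref{eq:constructive-prefix} is $\beta\pm hz_{i}$, and its mass is $\frac{\delta e^{s_{0}}}{2r}e^{\pm hz_{i}}$. Summing over the $2r$ slots and dividing by $e^{s_{0}}$ gives
\[
\frac{Z_{S}}{e^{s_{0}}}=\delta\bar c,\qquad \bar c=\frac1r\sum_{i}\cosh(hz_{i})\in[1,\cosh(1)],
\]
where the range of $\bar c$ uses $hM\le1$. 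For the numerator, the signed values $\pm\gamma B_{:i}$ turn the cosh into a sinh. With the chosen $\gamma$,
\[
\frac{N_{S}}{e^{s_{0}}}=\frac{\delta\gamma}{r}\sum_{i}\sinh(hz_{i})B_{:i}=\frac{1+\delta}{h}\sum_{i}\sinh(hz_{i})B_{:i}.
\]
This cancellation is the point of pairing the slots with opposite signs: the mass is even in $z_{i}$ while the numerator is odd.

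\textbf{Residual decomposition.} Using~\eqref{eq:prompt-head},
\[
h_{S,x}=\frac{\WV x+\frac{1+\delta}{h}\sum_{i}\sinh(hz_{i})B_{:i}}{1+\delta\bar c}.
\]
Multiplying $h_{S,x}-T(x)$ by the denominator $1+\delta\bar c\ge1$ gives
\[
-\delta\bar c\,\WV x\;+\;\delta(1-\bar c)\,Bz\;+\;(1+\delta)\sum_{i}\Bigl(\tfrac{\sinh(hz_{i})}{h}-z_{i}\Bigr)B_{:i}.
\]
Dividing back by $1+\delta\bar c\ge1$ can only shrink each term, so I bound the three terms separately.
\begin{itemize}
\item The first term is at most $\delta\cosh(1)C_{0}$, using $\bar c\le\cosh(1)$ and $\norm{\WV x}_{2}\le C_{0}$.
\item The second term is at most $\delta(\cosh(1)-1)\norm{B}_{\rm op}\sqrt{r}M$, using $\norm{z}_{2}\le\sqrt{r}M$.
\item For the third term, Taylor's theorem with Lagrange remainder gives $|\sinh y-y|\le|y|^{3}\cosh(1)/6$ for $|y|\le1$. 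Applying it with $y=hz_{i}$ yields $|\sinh(hz_{i})/h-z_{i}|\le h^{2}M^{3}\cosh(1)/6$. The coefficient vector then has Euclidean norm at most $\sqrt{r}\,h^{2}M^{3}\cosh(1)/6$, and applying $B$ costs a factor $\norm{B}_{\rm op}$.
\end{itemize}
Together these give exactly~\eqref{eq:value-bound-main}, uniformly over $x\in\mathcal{K}$.

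\textbf{Parameter choice.} I take $\delta=c_{1}\epsilon$ and $h=c_{2}\sqrt{\epsilon}$, with constants $c_{1},c_{2}$ depending on $C_{0},\norm{B}_{\rm op},r,M$. The constants are chosen so that each of the three terms is at most $\epsilon/3$ and $hM\le1$ holds. Then $\gamma=r(1+\delta)/(\delta h)=O(\epsilon^{-3/2})$.

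The derivation is largely routine. The main care points are two: confirming that the exposure assumption applies to the readout query itself, and keeping the condition $hM\le1$ in force so that both $\bar c\le\cosh(1)$ and the Taylor remainder bound hold.
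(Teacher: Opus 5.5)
Your proof is correct and follows the paper's own argument. It uses the same signed-pair computation, which makes the prefix mass even in $z_i$ (a sum of $\cosh$ terms) and the numerator odd (a sum of $\sinh$ terms), and the same three-term residual split (base attenuation, mass mismatch $\delta(\bar c-1)$, and the third-order $\sinh$ remainder), each bounded using a denominator at least the frozen mass $e^{s_{0}}$, followed by the same $\delta=\Theta(\epsilon)$, $h=\Theta(\sqrt{\epsilon})$ schedule; the only differences are cosmetic (you normalize by $e^{s_{0}}$ and split $\epsilon$ into thirds rather than halves).
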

The signed pairs produce $\sinh(hz_{i})/h=z_{i}+O(h^{2})$, while a small total prefix mass limits attenuation of the base output. Their values compensate for both the small mass and bandwidth. This establishes a sufficient rank-dependent construction, not a minimal slot count or a universal lower bound on parameter norms. Its precision cost is measured directly in Section~\ref{sec:empirical}.

Write $z=Ax$, $b_{i}=B_{:i}$, $E=e^{s_{0}}$, and $a=e^{\beta}$. Query exposure gives the signed pair logits $\beta\pm hz_{i}$. With values $\pm\gamma b_{i}$, their total numerator is $2a\gamma\sum_{i}\sinh(hz_{i})b_{i}$. Let $D_{0}=E+2ar$ and choose $\gamma=D_{0}/(2ah)$. The full denominator and output are
\begin{align}
D(z)&=E+2a\sum_{i}\cosh(hz_{i}),\nonumber\\
h_{S,(x)}&=\frac {E}{D(z)}\WV x+\frac{D_{0}}{D(z)}B\widetilde {z},
\qquad\widetilde {z}_{i}=\frac{\sinh(hz_{i})}{h}.
\label{eq:constructed-output}
\end{align}
The target is $\WV x+Bz$. Set $\delta=2ar/E$. For $hM\le1$,
\begin{align*}
0\le D(z)-E&\le E\delta\cosh(1),\\
|D(z)-D_{0}|&\le E\delta(\cosh(1)-1),\\
\norm{\widetilde {z}-z}_{2}&\le\frac{\sqrt {r}\,h^{2}M^{3}\cosh(1)}{6}.
\end{align*}
The last inequality is the third-order Taylor remainder for $\sinh$. Subtracting the target in~\eqref{eq:constructed-output}, using $D(z)\ge E$, and bounding $\norm{Bz}\le\norm {B}_{\rm op}\sqrt {r} M$ give~\eqref{eq:value-bound-main}. Choose $\delta$ to make the two attenuation terms at most $\epsilon/2$, then choose positive $h$ so that $hM\le1$ and the Taylor term is at most $\epsilon/2$. This proves uniform approximation using exactly $2r$ slots. If a coefficient vanishes, its corresponding restriction can simply be omitted; the zero-update case does not create an obstruction.

Finally, $a=\delta E/(2r)$ gives $\beta=\log(\delta E/(2r))$ and $\gamma=r(1+\delta)/(\delta h)$. A schedule $\delta=\Theta(\epsilon)$, $h=\Theta(\sqrt{\epsilon})$ has $\gamma=O(\epsilon^{-3/2})$ and a logarithmically diverging logit-intercept magnitude. The bound is an exact-real-arithmetic statement. It does not control cancellation and rounding after keys and values are represented in finite precision.

\begin{algorithm}[t]
\caption{Constructing a signed prefix under verified exposure}
\label{alg:compile}
\begin{algorithmic}[1]
\Require factors $B,A$, exposure vectors $u_{0},\ldots,u_{r}$, constant self-score $s_{0}$, bounds $C_{0},M$, tolerance $\epsilon>0$
\State choose $\delta>0$ so the two attenuation terms of~\eqref{eq:value-bound-main} are at most $\epsilon/2$
\State choose $h>0$ with $hM\le1$ so its Taylor term is at most $\epsilon/2$
\State $\beta\gets\log(\delta e^{s_{0}}/(2r))$, $\gamma\gets r(1+\delta)/(\delta h)$
\For{$i=1,\ldots,r$}
\State $\kappa_{i}^{\pm}\gets\sqrt{d_{k}}(\beta u_{0}\pm h u_{i})$
\State $\nu_{i}^{\pm}\gets\pm\gamma B_{:i}$
\EndFor
\State \Return the $2r$ independent key--value pairs
\end{algorithmic}
\end{algorithm}

\section{Witness matrices and output projections}
\label{app:query-proof}
This appendix gives the witness matrices behind Theorem~\ref{thm:query-separation} and the value counterexample, and carries both through the output projection.

The same-head witness has
\[
\WQ=\begin{bmatrix}0&0&1\\0&0&0\end{bmatrix},\qquad
\WK=\begin{bmatrix}0&0&0\\1&0&0\end{bmatrix},\qquad
\WV=\begin{bmatrix}0&1&0\end{bmatrix}.
\]
At $x_{3}=e_{3}$, $q=e_{1}$. The first two keys are $\pm e_{2}$, so every frozen logit is zero and their values are $(1,-1,0)$. Swapping the keys while retaining the two values gives the two inputs in the main text; both have $Z=3,N=0$.

For $\Delta\WQ=\sqrt{2}\alpha e_{2}e_{3}^{\top}$, the query becomes $e_{1}+\sqrt{2}\alpha e_{2}$. Scaled logits are $(\alpha,-\alpha,0)$ on $X^{+}$ and $(-\alpha,\alpha,0)$ on $X^{-}$. The targets are therefore
\[
\pm\frac{e^{\alpha}-e^{-\alpha}}{e^{\alpha}+e^{-\alpha}+1}=\pm\eta_{\alpha}.
\]
Every shared prefix has one common output, so its maximum error is at least $\eta_{\alpha}$. A prefix with zero values has numerator zero on both inputs and attains the bound. It is thus an exact minimax value, not only a lower bound.

For $\Delta\WV=c e_{3}^{\top}$, values become $(1,-1,c)$ under the unchanged uniform attention. Both target outputs equal $c/3$. A single zero-key slot with value $4c/3$ gives output $(4c/3)/(3+1)=c/3$, proving exact value compilation on the same head and domain. Both updates are rank one for their nonzero scales.

\begin{proposition}[Value-side observability failure]
\label{prop:value-boundary}
With input dimension two, scalar keys and values, $\WQ=\WK=\WV=0$, and $X^{\pm}=(\pm e_{1},e_{2})$, both summaries equal $(0,2,0)$. The rank-one update $\Delta\WV=[1\;0]$ gives targets $\pm1/2$. Every shared prefix has worst-case error at least $1/2$.
\end{proposition}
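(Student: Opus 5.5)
The argument computes the frozen summaries directly and then invokes the fiber-diameter obstruction of Proposition~\ref{prop:fiber-bound}. With $\WQ=\WK=\WV=0$, every query, key, and value vanishes, so at the readout $j=2$ we have $q=0$. Each content logit is $s_i=0$, which gives $Z=e^{0}+e^{0}=2$, and each value is $v_i=0$, which gives $N=0$. This holds for both $X^{+}$ and $X^{-}$, since the inputs enter only through the zero maps. Hence $\Sigma(X^{+})=\Sigma(X^{-})=(0,2,0)$, and the pair is an equal-summary pair in the sense of Lemma~\ref{lem:prefix-decomposition}.

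Next we compute the targets. After the update, the value matrix is $\WV+\Delta\WV=[1\;0]$, so the adapted values on $X^{\pm}$ are $(\pm1,0)$. The query and keys are unchanged and still zero, so attention stays uniform with weights $1/2$. The adapted head output is therefore $T(X^{\pm})=\pm1/2$. The update is a nonzero $1\times2$ row, hence rank one.

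Finally, Proposition~\ref{prop:fiber-bound} gives the bound
\[
\inf_{S}\mathcal{E}\ge\tfrac12\norm{T(X^{+})-T(X^{-})}_{2}=\tfrac12.
\]
The same conclusion follows directly. By the decomposition, every shared prefix $S$ returns one common output $y_S$ on both inputs, and the triangle inequality gives $1\le\norm{1/2-y_S}+\norm{y_S+1/2}\le 2\max$, so the worst-case error is at least $1/2$ at every finite length. The prefixed output is in fact identical on the two inputs even though $q=0$: every prefix key then has unit mass, so $Z_S=m$, and the output is $N_S/(2+m)$, independent of the sign. The only delicate point is this zero-query case, and the explicit computation disposes of it. Optionally, one can note that the bound is attained by a zero-value slot, which gives output $0$.
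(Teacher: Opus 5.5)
Your proposal is correct and matches the paper's argument. Like the paper, you compute $\Sigma(X^{\pm})=(0,2,0)$, obtain the targets $\pm1/2$ from uniform content attention, and then apply Proposition~\ref{prop:fiber-bound}. Your explicit check of the $q=0$ case, with $Z_S=m$ and common output $N_S/(2+m)$, is correct but not needed, because the fiber bound does not require a nonzero query.
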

Uniform content attention gives the two targets directly, and Proposition~\ref{prop:fiber-bound} proves the result. This is an observability failure; the amplifying example in Corollary~\ref{cor:matched-value} is instead a realizability failure with an observable target.

\begin{corollary}[Output projection]
\label{cor:output-projection}
An approximation error at most $\epsilon$ before $\WO$ becomes at most $\norm{\WO}_{\rm op}\epsilon$ after projection. In the scalar query witness, let $w=\WO(1)$. Its projected minimax error is exactly $\eta_{\alpha}\norm{w}_{2}$.
\end{corollary}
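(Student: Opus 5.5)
The statement has two parts, and I will handle them separately.

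\textbf{First claim (operator-norm transfer).} This is immediate. For any prefix $S$ and input $X$, linearity of $\WO$ gives
\[
\norm{\WO h_{S,X}-\WO T(X)}_{2}\le\norm{\WO}_{\rm op}\norm{h_{S,X}-T(X)}_{2}.
\]
Taking the supremum over $X\in\mathcal D$ bounds the projected uniform error by $\norm{\WO}_{\rm op}\,\mathcal E_{\mathcal D}(S,T)$. A prefix with pre-projection error at most $\epsilon$ therefore has projected error at most $\norm{\WO}_{\rm op}\epsilon$. The same inequality applies to infima, so compilation before projection implies compilation after it.

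\textbf{Second claim (exact projected value $\eta_\alpha\norm{w}_2$ in the scalar witness).} I will prove matching lower and upper bounds.

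Here $d_v=1$, so $\WO$ is a column: $\WO y=y\,w$ for scalar $y$, with $w=\WO(1)$.

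For the lower bound, I will reuse the summary argument of Proposition~\ref{prop:fiber-bound}. Both inputs $X^{\pm}$ of Theorem~\ref{thm:query-separation} have summary $(e_1,3,0)$. By Lemma~\ref{lem:prefix-decomposition}, every finite prefix therefore returns one common scalar output $y_S$ on both. The projected targets are $\pm\eta_\alpha w$. By the triangle inequality, applied after projection,
\[
2\eta_\alpha\norm{w}_{2}=\norm{\eta_\alpha w-(-\eta_\alpha w)}_{2}\le\norm{y_S w-\eta_\alpha w}_{2}+\norm{y_S w+\eta_\alpha w}_{2}.
\]
Hence the projected worst-case error is at least $\eta_\alpha\norm{w}_2$ at every prefix length.

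For the upper bound, I will take the zero-value prefix already used in Appendix~\ref{app:query-proof}, for example one zero-key slot with zero value. Its numerator is $N+0=0$ on both inputs, so $y_S=0$, and the projected error on each input is exactly $\norm{\eta_\alpha w}_2$.

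The two bounds coincide, which gives the exact value $\eta_\alpha\norm{w}_2$.

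\textbf{Main obstacle.} The only delicate point is the degenerate case $w=0$. There both sides are zero and the statement holds trivially, but this should be stated explicitly. Otherwise the argument uses only linearity and the common-summary fact.
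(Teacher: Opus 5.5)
Your proposal is correct and follows the paper's own argument. You bound the transfer by submultiplicativity, lower-bound the projected error through the common prefixed scalar $y$ (whose projected errors are $(y\mp\eta_\alpha)w$), and attain the bound with zero prefix values so that $y=0$, which gives exactly $\eta_\alpha\norm{w}_2$.
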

The upper bound is submultiplicativity. Every prefix gives the pair a common scalar $y$, whose projected target errors are $(y-\eta_{\alpha})w$ and $(y+\eta_{\alpha})w$. Their maximum norm is at least $\eta_{\alpha}\norm {w}$, and $y=0$ attains it. The obstruction disappears precisely when $w=0$. A deeper network may also change or erase the witness, so a head-level lower bound is not automatically an output-token lower bound.

\section{Controlled experimental protocols}
\label{app:controlled}
Each protocol below specifies one controlled experiment of Section~\ref{sec:empirical}.
\subsection{Direct construction and numerical precision}
For rank $r\in\{1,2,4,8\}$, the input is $x=(1,z)\in\R^{r+1}$, query projection is the identity, key projection is zero, and value dimension is $r+2$. The frozen value map sends the constant coordinate to a random unit vector $v_{0}$ and every varying coordinate to zero. Thus $\WV x=v_{0}$, $C_{0}=1$, self-score $s_{0}=0$, and $M=1$. A random $(r+2)\times r$ matrix is normalized to operator norm one to obtain $B$. The update matrix $A$ selects the last $r$ input coordinates. This construction verifies the exposure assumption rather than estimating it.

For each of 20 seeds, we draw 10,000 independent uniform cube points and append all $2^{r}$ vertices. The same head and test set are used for all tolerances and precisions. Let
\[
c_{r}=\cosh(1)+(\cosh(1)-1)\sqrt {r},\quad
\delta=\frac{\epsilon}{2c_{r}},\quad
h=\min\left\{1,\sqrt{\frac{3\epsilon}{(1+\delta)\sqrt {r}\cosh(1)}}\right\}.
\]
Together with~\eqref{eq:constructive-prefix}, these parameters make the analytic error bound at most $\epsilon$. The tolerances are $10^{-1},10^{-2},10^{-3},10^{-4},10^{-6}$.

Direct attention is evaluated in float64, float32, and bfloat16. The attention temperature is absorbed into the stored keys, so the represented key parameters are $\kappa/\sqrt{d_{k}}$; inputs, these scaled keys, and values are cast to the designated precision before the matrix products and softmax. Errors are computed against the high-precision target. A stable float64 evaluation of~\eqref{eq:constructed-output} is also recorded as a construction check. The experiment has $4\times20\times5\times3=1200$ numerical rows. Passing means the finite test-set maximum does not exceed the requested tolerance; the exact-arithmetic uniform guarantee comes from the theorem, not from this test.

\begin{table}[t]
\centering\small
\caption{Tolerance passes among 100 seed--tolerance cases per rank and precision. Every precision uses the same analytic construction and test points.}
\label{tab:precision-counts}
\begin{tabular}{rrrr}
\toprule
Rank & Float64 & Float32 & Bfloat16\\
\midrule
1 & $100/100$ & $80/100$ & $20/100$ \\
2 & $100/100$ & $80/100$ & $17/100$ \\
4 & $100/100$ & $69/100$ & $1/100$ \\
8 & $100/100$ & $60/100$ & $0/100$ \\
\bottomrule
\end{tabular}
\end{table}

The rank-two float64 mean sampled maximum at tolerance $10^{-2}$ is $0.00439$ with sample SD $0.00038$; at $10^{-6}$ it is $4.40\times10^{-7}$ with SD $3.77\times10^{-8}$. Bfloat16 at tolerance $10^{-3}$ has mean sampled maximum $1.1029$ and SD $0.0676$. These spreads vary random heads and test sets through the seed; they are not standard errors or estimates of worst-case failure probability. The complete numerical output additionally records mean and 99.9th-percentile error, the analytic bound, bandwidth, mass, logit intercept, value scale, and maximum value norm.

\subsection{Matched-effect realizability}
Use the two one-token inputs $x^{\pm}=(1,\pm1)$, scalar query $q=1$, zero content keys, and frozen output $h^{\pm}=\pm1$. Targets are $T^{\pm}=\pm a_{0}$ with $a_{0}=.5$ or $1.5$. For key caps $K\in\{2,4,8\}$ and lengths $m\in\{1,4,16,64\}$, let $t_{*}$ be the projection of $a_{0}$ onto the interval~\eqref{eq:bounded-contraction}. Give every slot the key
\[
\kappa=\log\frac{1/t_{*}-1}{m}
\]
and value zero. The attention output is exactly $\pm t_{*}$, and its maximum error is $|t_{*}-a_{0}|$. This is both an attaining construction and the analytically optimal bounded-prefix value. The code verifies the cap and formula agreement for all 24 cases; these deterministic cases have no training-seed uncertainty.

\begin{table}[t]
\centering\small
\caption{Exact best worst-case error on the matched two-input domain, with per-slot key cap $K=4$. Both unprefixed errors equal $0.5$. The constructed prefix attains the bounded optimum; no optimizer performance is involved.}
\label{tab:common-query}
\begin{tabular}{rrr}
\toprule
Slots & Contracting target $a_{0}=.5$ & Amplifying target $a_{0}=1.5$\\
\midrule
1 & $0.0000$ & $0.5180$ \\
4 & $0.0000$ & $0.5683$ \\
16 & $0.0000$ & $0.7266$ \\
64 & $0.0396$ & $1.0396$ \\
\bottomrule
\end{tabular}
\end{table}

\subsection{Near-fiber optimization}
Extend the query witness to input dimension four. The matrices satisfy
\[
\WQ e_{3}=e_{1},\quad\WQ e_{4}=e_{2},\quad\WQ e_{1}=\WQ e_{2}=0,\qquad
\WK=e_{2}e_{1}^{\top},\quad\WV=e_{2}^{\top}.
\]
Use
\[
X^{+}=(e_{1}+e_{2},-e_{1}-e_{2},e_{3}+e e_{4}),\qquad
X^{-}=(-e_{1}+e_{2},e_{1}-e_{2},e_{3}-e e_{4}).
\]
Then $q^{\pm}=e_{1}\pm e e_{2}$, $Z^{\pm}=2\cosh(e/\sqrt{2})+1$, and $N^{\pm}=2\sinh(e/\sqrt{2})$. The query update is $\Delta\WQ=2\sqrt{2} e_{2}e_{3}^{\top}$, giving the targets stated in the main text. For $e\in\{0,.001,.01,.1\}$, $\norm {h}\le1$, and the two frozen outputs and log partitions agree. Equation~\eqref{eq:summary-modulus} becomes
\[
\omega=\left(1+\frac{1+1}{4}\right)\frac{2}{\sqrt{2}}\,(2e).
\]
The lower bound is computed on the exact two-input domain, with no nearest-neighbor approximation.

For each perturbation, a shared 16-slot prefix is initialized ten times with independent Gaussian keys and scalar values scaled by $0.2$. Projected Adam uses learning rate $0.02$ and 1000 steps. After each step, keys are projected to the radius-two Euclidean ball and values clipped to $[-1,1]$. The objective is the maximum absolute target error across the pair; the best iterate is retained. Reporting its mean and SD across initializations characterizes the optimizer on a fixed problem, not performance across independent datasets. The smallest attained error is an upper bound on the bounded-prefix optimum, while the analytic expression is a lower bound. A gap between them may reflect either looseness of the theorem or optimization error.

\section{Direct GPT-2 numerical evaluation scenarios}
\label{app:gpt2-protocol}
Adapter effects, conic brackets, optimizer errors, and transfer errors are assumed; their normalizations and order relationships are calculated.

\subsection{Interface, target, and resource specification}
The design fixes the pretrained GPT-2 first attention block, the readout token, and its explicit position. It uses heads $0,4,8$, adapter ranks $1,4$, and training seeds $0,1,2$. Fitting and evaluation domains each contain 128 contexts: 31 content token IDs followed by the same end-of-text token at position 31. Position IDs remain $0,\ldots,31$, dropout is disabled, and cached states follow the first tokenwise layer normalization.

A value adapter changes only the selected head's value slice, and a query adapter changes only its query slice. The frozen affine projection biases remain in the computation. For value targets, frozen queries, keys, and attention weights remain unchanged; for query targets, frozen keys and values remain unchanged. The selected pretrained output block is $G=\WO$, with contributions of other heads held fixed. At this readout, the realized query displacement $BAx_j$, rather than nominal rank alone, determines the query target. The numerical scenarios do not substitute a random update for an unobserved trained adapter.

Table~\ref{tab:gpt2-caps} supplies explicit cap values for the scenarios. These physical key and value caps are assumptions, not observed maxima of pretrained activations. All principal rows use exactly four slots. Content summaries and reconstructed physical keys use one common logit offset; independently normalized examples are not assigned an unchanged common prefix mass. A nonzero computed query discrepancy requires the reference-query slack rather than an unqualified exact-common-query conclusion.
\begin{table}[t]
\centering\small
\setlength{\tabcolsep}{4pt}
\caption{Interface and resource inputs for the first-layer GPT-2 evaluation. $d_k$ is head width, $d_o$ is residual width, and $K,V$ are physical per-slot norm caps. Fitting and evaluation contexts are distinct design partitions.}
\label{tab:gpt2-caps}
\begin{tabular}{lrrrrrrr}
\toprule
Head & $d_k$ & $d_o$ & Fit & Evaluation & $m$ & $K$ & $V$\\
\midrule
0 & 64 & 768 & 128 & 128 & 4 & 8.40 & 6.10\\
4 & 64 & 768 & 128 & 128 & 4 & 10.20 & 7.30\\
8 & 64 & 768 & 128 & 128 & 4 & 9.10 & 6.70\\
\bottomrule
\end{tabular}
\end{table}

\subsection{Adapter effect and finite-domain approximation}
Table~\ref{tab:gpt2-result-record} reports a complete numeric scenario for each head, target placement, and rank. $D^G$ is the maximum projected adapter effect. The assumed conic bracket is $[E_{\rm lo},E_{\rm hi}]$, and the normalized upper endpoint is $E_{\rm hi}/D^G$. Every learned-prefix value exceeds the corresponding feasible upper endpoint; no failed optimization is treated as a lower bound. A physical reconstruction attaining the upper endpoint would be obtained from Equation~\eqref{eq:aggregate-reconstruction}, but its pretrained attention evaluation is not asserted by the assumed table.
\begin{table}[t]
\centering\small
\setlength{\tabcolsep}{3.7pt}
\caption{Pretrained-head comparison. All errors use the selected output block $G=\WO$ and exactly 4 slots. The lower and upper conic endpoints differ by at most $10^{-6}$; the last column divides the upper endpoint by the adapter effect $D^G$.}
\label{tab:gpt2-result-record}
\begin{tabular}{lrrrrrrr}
\toprule
Head & Target & $r$ & $D^G$ & $E_{\rm lo}$ & $E_{\rm hi}$ & Learned best & $E_{\rm hi}/D^G$\\
\midrule
0 & Value & 1 & 0.0826 & 0.015198 & 0.015198 & 0.017263 & 0.184\\
0 & Value & 4 & 0.1462 & 0.059503 & 0.059503 & 0.063889 & 0.407\\
0 & Query & 1 & 0.0918 & 0.057925 & 0.057926 & 0.061139 & 0.631\\
0 & Query & 4 & 0.1584 & 0.085852 & 0.085853 & 0.092189 & 0.542\\
4 & Value & 1 & 0.0735 & 0.050200 & 0.050201 & 0.052038 & 0.683\\
4 & Value & 4 & 0.1327 & 0.098463 & 0.098463 & 0.102444 & 0.742\\
4 & Query & 1 & 0.0841 & 0.022454 & 0.022455 & 0.025398 & 0.267\\
4 & Query & 4 & 0.1439 & 0.045903 & 0.045904 & 0.051660 & 0.319\\
8 & Value & 1 & 0.1068 & 0.041438 & 0.041438 & 0.044108 & 0.388\\
8 & Value & 4 & 0.1813 & 0.102615 & 0.102616 & 0.108055 & 0.566\\
8 & Query & 1 & 0.0987 & 0.044513 & 0.044514 & 0.047968 & 0.451\\
8 & Query & 4 & 0.1724 & 0.083958 & 0.083959 & 0.090855 & 0.487\\
\bottomrule
\end{tabular}
\end{table}

These scenarios deliberately include both orderings of value and query targets. At head 0 and rank 1, the assumed residual fractions are .184 and .631; at head 4, they are .683 and .267. Such a pattern would be compatible with a target-specific boundary. Absolute effects remain visible so a small residual is not confused with a nearly zero adapter.

\begin{figure}[t]
\centering
\includegraphics[width=\linewidth]{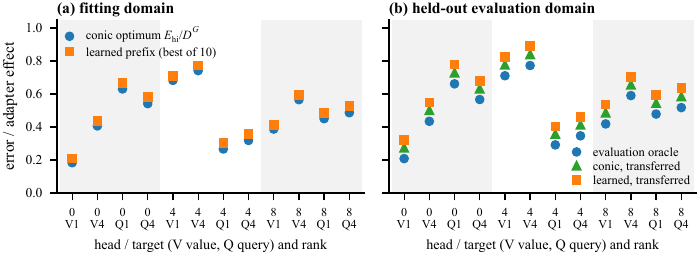}
\caption{First-layer GPT-2 heads under the capped test with four slots and $G=\WO$. Each column is a head (0, 4, 8), target (V value, Q query), and rank. (a) Conic optimum and best learned prefix on the fitting domain, divided by the adapter effect $D^G$. (b) Worst-case error on held-out evaluation contexts, divided by the evaluation-domain effect.}
\label{fig:pretrained}
\end{figure}

\subsection{Learned-prefix restart comparison and transfer}
The gradient comparator uses ten independent restarts, 2000 projected-Adam steps per restart, and learning rate .01. Its objective is the maximum projected error on the same fitting domain and with the same caps as the conic problem. A conic-initialized run checks implementation consistency but does not replace the independent-restart comparator. The scenario's best and median restart errors are numerically distinct, so optimization quality is not summarized by an assumed conic result alone.
\begin{table}[t]
\centering\small
\setlength{\tabcolsep}{4pt}
\caption{Same-domain optimizer comparison. The relative gap is $(E_{\rm best}-E_{\rm hi})/D^G$. Best and median summarize the ten restarts; no uncertainty interval or independent-dataset claim is attached to them.}
\label{tab:gpt2-optimizer-scenario}
\begin{tabular}{lrrrrrr}
\toprule
Head & Target & $r$ & Conic upper & Learned best & Learned median & Relative gap\\
\midrule
0 & Value & 1 & 0.015198 & 0.017263 & 0.020485 & 0.025\\
0 & Value & 4 & 0.059503 & 0.063889 & 0.069737 & 0.030\\
0 & Query & 1 & 0.057926 & 0.061139 & 0.064903 & 0.035\\
0 & Query & 4 & 0.085853 & 0.092189 & 0.095990 & 0.040\\
4 & Value & 1 & 0.050201 & 0.052038 & 0.055346 & 0.025\\
4 & Value & 4 & 0.098463 & 0.102444 & 0.108549 & 0.030\\
4 & Query & 1 & 0.022455 & 0.025398 & 0.027837 & 0.035\\
4 & Query & 4 & 0.045904 & 0.051660 & 0.055977 & 0.040\\
8 & Value & 1 & 0.041438 & 0.044108 & 0.049555 & 0.025\\
8 & Value & 4 & 0.102616 & 0.108055 & 0.114219 & 0.030\\
8 & Query & 1 & 0.044514 & 0.047968 & 0.051423 & 0.035\\
8 & Query & 4 & 0.083959 & 0.090855 & 0.097061 & 0.040\\
\bottomrule
\end{tabular}
\end{table}

The evaluation-domain oracle and fitting-to-evaluation transfer answer different questions. Table~\ref{tab:gpt2-transfer-scenario} keeps a fixed evaluation normalization and gives the evaluation oracle a lower error than either transferred prefix. The conic prefix in the transfer column is fitted only on fitting targets, then frozen. Its transfer error is not the evaluation-domain optimum. The learned-transfer column follows the same separation.
\begin{table}[t]
\centering\small
\setlength{\tabcolsep}{4pt}
\caption{Held-out transfer comparison. All values are worst-case projected errors divided by the evaluation-domain adapter effect. The oracle uses evaluation targets; the transferred prefixes do not. These ratios are not a generalization guarantee.}
\label{tab:gpt2-transfer-scenario}
\begin{tabular}{lrrrrr}
\toprule
Head & Target & $r$ & Evaluation oracle & Conic transfer & Learned transfer\\
\midrule
0 & Value & 1 & 0.209 & 0.274 & 0.321\\
0 & Value & 4 & 0.435 & 0.501 & 0.550\\
0 & Query & 1 & 0.662 & 0.729 & 0.780\\
0 & Query & 4 & 0.567 & 0.632 & 0.679\\
4 & Value & 1 & 0.711 & 0.777 & 0.826\\
4 & Value & 4 & 0.773 & 0.840 & 0.891\\
4 & Query & 1 & 0.292 & 0.357 & 0.404\\
4 & Query & 4 & 0.347 & 0.413 & 0.462\\
8 & Value & 1 & 0.419 & 0.486 & 0.537\\
8 & Value & 4 & 0.591 & 0.656 & 0.703\\
8 & Query & 1 & 0.479 & 0.545 & 0.594\\
8 & Query & 4 & 0.518 & 0.585 & 0.636\\
\bottomrule
\end{tabular}
\end{table}

\subsection{Norm caps and the meaning of a slot budget}
Table~\ref{tab:gpt2-cap-scenario} varies both physical caps by the same multiplier. Each row is a fixed-count problem; an at-most budget also permits all smaller counts and the unprefixed output. The interior optima remain equal across counts when the aggregate mass intervals overlap, rather than assigning a benefit to extra slots by default. At the tightest caps, the exactly-16-slot scenarios instead incur additional attenuation. The at-most column takes the best permitted count and never increases with the budget.
\begin{table}[t]
\centering\small
\setlength{\tabcolsep}{3.5pt}
\caption{Rank-four resource sweep. Errors are normalized by the fixed target effect. The at-most values include all smaller counts and the zero-slot option; equal count-1 and count-4 optima give the displayed minima, and no better intermediate count is assumed.}
\label{tab:gpt2-cap-scenario}
\begin{tabular}{lrrrrrrrr}
\toprule
Head & Target & Scale & $m$ & $K$ & $V$ & Exact-count & Learned & At most\\
\midrule
0 & Value & 0.5 & 1 & 4.20 & 3.05 & 0.624 & 0.653 & 0.624\\
0 & Value & 0.5 & 4 & 4.20 & 3.05 & 0.624 & 0.658 & 0.624\\
0 & Value & 0.5 & 16 & 4.20 & 3.05 & 0.670 & 0.728 & 0.624\\
0 & Value & 1.0 & 1 & 8.40 & 6.10 & 0.407 & 0.436 & 0.407\\
0 & Value & 1.0 & 4 & 8.40 & 6.10 & 0.407 & 0.441 & 0.407\\
0 & Value & 1.0 & 16 & 8.40 & 6.10 & 0.407 & 0.465 & 0.407\\
0 & Value & 2.0 & 1 & 16.80 & 12.20 & 0.351 & 0.380 & 0.351\\
0 & Value & 2.0 & 4 & 16.80 & 12.20 & 0.351 & 0.385 & 0.351\\
0 & Value & 2.0 & 16 & 16.80 & 12.20 & 0.351 & 0.409 & 0.351\\
4 & Query & 0.5 & 1 & 5.10 & 3.65 & 0.511 & 0.540 & 0.511\\
4 & Query & 0.5 & 4 & 5.10 & 3.65 & 0.511 & 0.545 & 0.511\\
4 & Query & 0.5 & 16 & 5.10 & 3.65 & 0.557 & 0.615 & 0.511\\
4 & Query & 1.0 & 1 & 10.20 & 7.30 & 0.319 & 0.348 & 0.319\\
4 & Query & 1.0 & 4 & 10.20 & 7.30 & 0.319 & 0.353 & 0.319\\
4 & Query & 1.0 & 16 & 10.20 & 7.30 & 0.319 & 0.377 & 0.319\\
4 & Query & 2.0 & 1 & 20.40 & 14.60 & 0.273 & 0.302 & 0.273\\
4 & Query & 2.0 & 4 & 20.40 & 14.60 & 0.273 & 0.307 & 0.273\\
4 & Query & 2.0 & 16 & 20.40 & 14.60 & 0.273 & 0.331 & 0.273\\
\bottomrule
\end{tabular}
\end{table}

\subsection{Numerical validation requirements}
The conic feasibility check uses the tolerance and accounting in Algorithm~\ref{alg:capped-socp}. Primal cap and response residuals determine a feasible reconstructed upper endpoint; a lower endpoint requires a separately checked infeasibility witness. The intervals in Table~\ref{tab:gpt2-result-record} are not such witnesses.

For a zero-effect target, the ratio is undefined and absolute errors remain the appropriate report. For unequal numerical queries, the perturbation term is evaluated on the actual query discrepancies. Multihead compensation, earlier-layer prefix effects, shifted positions, and downstream token predictions remain outside this fixed-head comparison.

\end{document}